\documentclass{article}

\usepackage{microtype}
\usepackage{graphicx}
\usepackage{subcaption}
\usepackage{booktabs} 
\usepackage{multirow}

\usepackage{hyperref}

\usepackage[accepted]{icml2026}

\usepackage{amsmath}
\usepackage{amssymb}
\usepackage{mathtools}
\usepackage{amsthm}
\usepackage{algorithm}
\usepackage{algorithmic}
\usepackage{enumitem}

\usepackage{xcolor}
\usepackage{colortbl}
\usepackage{tcolorbox}

\definecolor{tableheader}{RGB}{229,240,254}
\definecolor{tableours}{RGB}{238,249,255}

\definecolor{greyL}{RGB}{240,240,240}
\tcbset{
  highlightbox/.style={
    colback=greyL,
    colframe=black,
    boxrule=1pt,
    arc=5pt,
    boxsep=5pt,
    left=5pt, right=5pt, top=0pt, bottom=0pt,
    fontupper=\itshape
  }
}

\hypersetup{
    colorlinks=true,
    citecolor=cyan,
    linkcolor=red,
    urlcolor=cyan,
}

\usepackage[capitalize,noabbrev]{cleveref}

\theoremstyle{plain}
\newtheorem{theorem}{Theorem}[section]

\theoremstyle{definition}
\newtheorem{definition}[theorem]{Definition}

\theoremstyle{remark}

\definecolor{codecomment}{RGB}{63,127,127}
\renewcommand{\algorithmiccomment}[1]{\textcolor{codecomment}{\#\ #1}}

\usepackage[textsize=tiny]{todonotes}

\icmltitlerunning{ProRL: Effective RL for Proactive Recommendation via Rectified Policy Gradient Estimation}

\begin{document}

\twocolumn[
  \icmltitle{ProRL: Effective Reinforcement Learning for Proactive \\ Recommendation via Rectified Policy Gradient Estimation}

  \icmlsetsymbol{equal}{*}

  \begin{icmlauthorlist}
    \icmlauthor{Hongru Hou}{equal,yyy}
    \icmlauthor{Tiehua Mei}{equal,yyy}
    \icmlauthor{Denghui Geng}{yyy}
    \icmlauthor{Jinhui Huang}{yyy}\\
    \icmlauthor{Ao Xu}{yyy}
    \icmlauthor{Hengrui Chen}{yyy}
    \icmlauthor{Jiaqing Liang}{yyy}
    \icmlauthor{Deqing Yang}{yyy}
  \end{icmlauthorlist}
    
    \icmlaffiliation{yyy}{School of Data Science, Fudan University, Shanghai, China}
    
    \icmlcorrespondingauthor{Deqing Yang}{yangdeqing@fudan.edu.cn}
    \icmlkeywords{Machine Learning, ICML}

  \vskip 0.3in
]

\printAffiliationsAndNotice{\icmlEqualContribution}

\begin{abstract}
    Proactive Recommender Systems (PRSs) aim to guide user preference shift toward target items by generating paths of intermediate recommendations. Reinforcement learning (RL) provides a principled framework for optimizing such sequential decision tasks, as path rewards can naturally capture both short-term acceptance and long-term guidance effectiveness. However, naively applying policy gradients to PRS results in deficient gradient estimation. We identify two deficiencies: (1) path-level rewards decompose into step-level rewards with positive mean, creating a \emph{length-dependent bias} that causes gradients to favor path extension over meaningful exploration; (2) weighting each step by the entire path-level reward ignores the decomposition structure, leading to high gradient variance. To rectify these two deficiencies, we propose an effective RL framework \textbf{ProRL} with two novel mechanisms for proactive recommendation. First, Stepwise Reward Centering subtracts expected rewards to neutralize length-dependent bias, ensuring that path extension yields zero expected gradient signal. Second, Position-Specific Advantage Estimation leverages the reward decomposition structure to compute step-dependent baselines, reducing gradient variance. Together, these mechanisms yield policy gradients that precisely target path quality. Our experiments on three real-world datasets demonstrate that ProRL significantly outperforms state-of-the-art PRSs. Our code is available at \href{https://github.com/hongruhou89/ProRL}{\texttt{github.com/hongruhou89/ProRL}}.
\end{abstract}

\section{Introduction}
\label{sec:intro}

Recommender systems excel at reflecting what users already like~\cite{zhou2018deep,zhai2024actions,hou2025heterogeneous,mei2025goracs}, but platforms are rarely satisfied with merely mirroring past behavior~\cite{liu2021neural,CREU}. A streaming service that has just acquired an exclusive jazz catalogue, or an e-commerce site launching a new line of tech accessories, needs users to step beyond their established habits. However, when unfamiliar items are pushed directly into the feed, they are often ignored, lowering acceptance probability~\cite{zheng2018drn,cheng2016wide}. It exposes a fundamental tension: platforms need certain items to be discovered, while users are anchored in familiar preferences~\cite{li2019multi}.

\begin{figure}[t!]
\centerline{\includegraphics[width=\columnwidth]{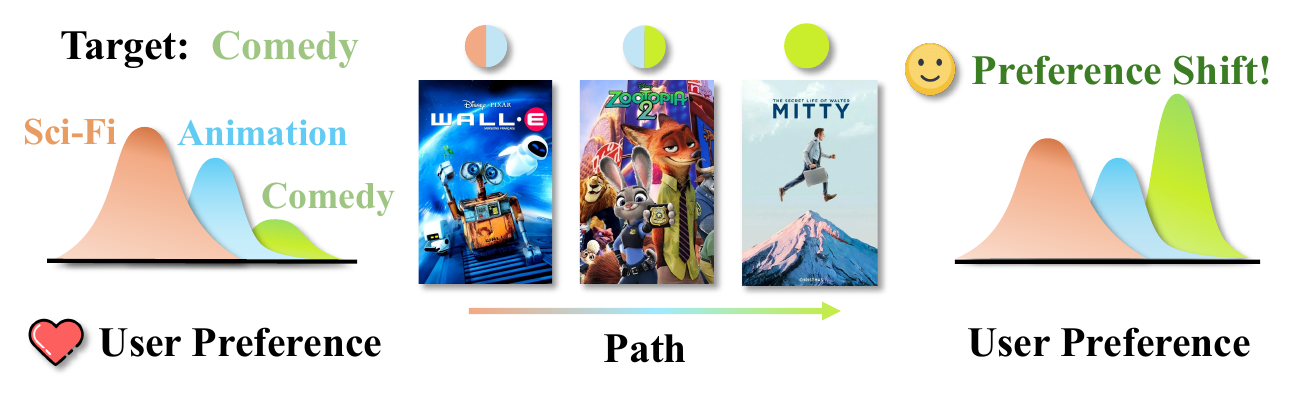}}
\caption{
A toy example of proactive recommendation.
By progressively blending genre (shown via pie charts), each intermediate item in the guidance path maintains the user's engagement while gradually shifting his/her preferences from Sci-Fi to Comedy.}
\label{fig:Illustrate}
\end{figure}

This tension motivates a different paradigm of recommendation: rather than abruptly presenting unfamiliar items, a recommender system can gradually shift user preferences toward them through carefully designed paths. \textbf{Proactive Recommendation Systems (PRSs)}~\cite{zhu2023influential,lian2025itmprec,wang2025tunable} are then proposed to implement this progressive guidance strategy. Given a user's interaction history and a platform-specified target item that the user has not yet engaged with, a PRS constructs a \emph{path} of intermediate items bridging current user preference to the target item. The system then sequentially recommends items along this path, maintaining acceptance probability at each step while shifting preferences toward the target item. As illustrated in Figure~\ref{fig:Illustrate}, to guide a Sci-Fi fan toward a comedy movie, the system might recommend WALL-E (Sci-Fi + Animation) $\rightarrow$ Zootopia (Animation + Comedy) $\rightarrow$ The Secret Life of Walter Mitty (Comedy). Each intermediate item remains acceptable to the user, yet the path as a whole cultivates interests for previously unexplored genre.

Designing such paths requires satisfying two objectives simultaneously~\cite{bi2024proactive}. The first is \textbf{Path Feasibility}: every intermediate item along the path must achieve high acceptance probability to maintain user engagement. The second is \textbf{Guidance Effectiveness}: the complete path must significantly increase the probability that the user eventually accepts the target item. In practice~\cite{zhu2023influential, wang2025tunable}, these probabilities are estimated by a user simulator, i.e., a recommender system (e.g., SASRec) trained on historical interactions (Section~\ref{sec:formulation}). Crucially, these two objectives must be optimized jointly, as locally feasible choices do not guarantee globally effective paths without foresight into their long-term consequences.

Existing PRS research has explored various strategies. Heuristic methods~\cite{bi2024proactive,lian2025itmprec} rely on predefined rules to greedily select items at each step, but such local search often yields globally suboptimal paths. LLM-based methods~\cite{wang2025leveraging,wang2025tunable} plan paths with large language models (LLMs), but are impractical for industrial deployment due to prohibitive costs. Supervised methods~\cite{zhu2023influential} treat historical interaction sequences as reference paths which are used to train compact Sequence-to-Sequence models (e.g., T5~\cite{raffel2020exploring}). While such lightweight models are attractive for deployment, their reliance on imitating historical data hinders discovering superior paths beyond the training distribution.

In this paper, we employ the lightweight transformer framework of prior work~\cite{zhu2023influential}, but seek to move beyond imitation of historical interactions. We formalize Path Feasibility and Guidance Effectiveness as quantitative metrics over which proactive recommendation is cast as a reward maximization problem. Reinforcement learning (RL) with policy gradient~\cite{sutton1999policy,mei2026good} handles this problem directly (Section~\ref{sec:task}): the model samples candidate paths, receives reward computed by these metrics, and learns to produce higher-reward paths via gradient-based updates. This exploration-driven paradigm should theoretically enable discovery of effective paths beyond the training distribution. However, preliminary empirical studies (Section \ref{sec:analysis}) reveal that standard policy-gradient RL exhibits severe failure modes in PRS.

\textbf{Policy Gradient Estimation Deficiencies.} Through empirical studies of applying standard policy-gradient RL to a PRS, we found that it rapidly degenerates into generating \emph{nearly identical overlong paths} (Section~\ref{sec:analysis}), preventing it from discovering effective, user-specific guidance paths. 
We trace this failure to two deficiencies in standard policy gradient estimation as below.

\emph{Deficiency 1: Length Shortcut.} We show that path-level rewards in PRS decompose into step-level rewards with a positive mean per step. Thus, longer paths yield higher expected rewards. In standard policy-gradient estimation, variation in sampled path lengths naturally arises, causing length to dominate the gradient signal. This biases the model toward extending paths rather than exploring diverse ones.

\emph{Deficiency 2: High Gradient Variance.} Standard estimation weights each step's gradient by the entire path-level reward. Given the decomposition structure above, this uniform treatment ignores that each step only affects future rewards, resulting in high gradient variance.

\textbf{ProRL: Rectified Policy Gradients for PRS.} 
To address these deficiencies, we propose ProRL, an RL framework that rectifies policy gradient estimation for proactive recommendation. 
Specifically, 
\emph{Stepwise Reward Centering} eliminates the length shortcut by subtracting the per-step mean at each position, rectifying the gradient away from spurious length manipulation toward effective path exploration. \emph{Position-Specific Advantage Estimation} reduces gradient variance by exploiting the decomposition structure of path rewards to define a low-variance advantage estimator, rectifying gradient estimates toward their expected values. These two rectifications together yield policy gradient estimates that precisely target path quality, enabling effective optimization of both feasibility and effectiveness.

In summary, the main contributions of this paper include:
\begin{enumerate}[leftmargin=*]

\item We identify two gradient estimation deficiencies specific to proactive recommendation, the length shortcut and high gradient variance, that cause standard policy gradients to fail in Proactivate Recommendation System.

\item We propose ProRL, which rectifies these deficiencies through two task-specialized mechanisms. Stepwise Reward Centering adapts classical reward centering to the positive-mean step reward structure of PRS, and Position-Specific Advantage Estimation leverages PRS reward decomposition to compute step-adapted baselines without a learned critic.

\item Extensive experiments on three real-world datasets demonstrate that ProRL significantly outperforms state-of-the-art methods. Ablation studies and cross-evaluator analysis validate each component's contribution and the generalizability of the learned policy.
\end{enumerate}
\section{Preliminaries}
\label{sec:formulation}
 
This section formalizes the proactive recommendation task within a reinforcement learning framework (Section~\ref{sec:task}), and then analyzes why standard policy gradient estimation fails in this setting (Section~\ref{sec:analysis}).
 
\begin{figure*}[t]
  \centering
  \includegraphics[width=0.8\linewidth]{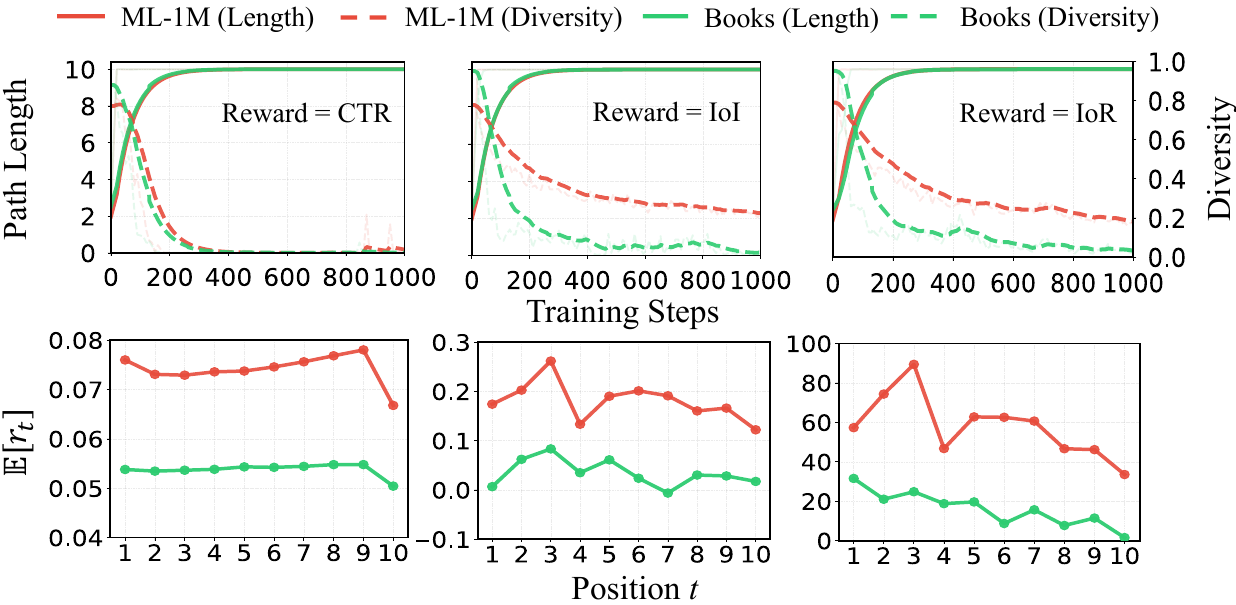}
  \caption{Standard policy gradient estimation degenerates into generating nearly identical overlong paths. (\textbf{Top row}) Training dynamics under three reward configurations (CTR, IoI, IoR). Each subplot shows path length (solid) and diversity (dashed) for MovieLens-1M (red) and Amazon-Book (green). (\textbf{Bottom row}) Expected step-level reward $\mathbb{E}[r_t]$ for each component on MovieLens-1M (red) and Amazon-Book (green). All components exhibit positive mean, enabling the length shortcut.}
  \label{fig:length-collapse}
\end{figure*}
 
\subsection{Basic Framework}
\label{sec:task}
 
As introduced in Section~\ref{sec:intro}, proactive recommendation bridges a user's existing preferences to a platform-specified target item via a path of intermediate recommendations. Formally, given a user's interaction history $S_u$ (sequence of interacted items) and a target item $i_T$, the system generates a recommendation path $L_u = (i_1, \ldots, i_L)$, where $L \leq L_{\max}$. 
 
Following standard practice~\cite{zhu2023influential,bi2024proactive}, we employ a user simulator to estimate acceptance probabilities. The simulator is a recommender model (e.g., SASRec~\cite{kang2018self}) trained on real-world interaction data. It provides estimated probability $P(i \mid S)$ that a user would accept item $i$ given the user's interaction sequence $S$ (representing his/her current preferences). This enables reward computation without online feedback.

Path quality is measured along two dimensions: \textbf{Guidance Effectiveness} captures how much the path increases predicted interest in the target, while \textbf{Path Feasibility} captures whether users would accept items along the path. To quantify these dimensions, we adopt three standard metrics~\cite{zhu2023influential,bi2024proactive}. Let $\oplus$ denote sequence concatenation and $\mathrm{Rank}(i \mid S)$ denote the ranking position of item $i$ given by the simulator. The metrics are defined as:
\begin{align}
\mathrm{IoI} &:= \log P(i_T \mid S_u \oplus L_u) - \log P(i_T \mid S_u), \nonumber\\
\mathrm{IoR} &:= \mathrm{Rank}(i_T \mid S_u) - \mathrm{Rank}(i_T \mid S_u \oplus L_u), \nonumber\\
\mathrm{CTR} &:= \frac{1}{|L_u|} \sum_{k=1}^{|L_u|} P\bigl(i_k \mid S_u \oplus L_u^{<k}\bigr). \nonumber
\end{align}
Here IoI (Increment of Interest) and IoR (Increment of Rank) quantify Guidance Effectiveness, while CTR (Click-Through Rate) quantifies Path Feasibility. Effective paths must optimize both dimensions. This naturally motivates a reward defined as a weighted sum of these metrics:
\begin{equation}
R_{\text{path}} = \alpha \cdot \mathrm{IoI} + \beta \cdot \mathrm{IoR} + \gamma \cdot \mathrm{CTR}.
\label{eq:reward}
\end{equation}
With path quality explicitly quantified via the reward in Eq.~\eqref{eq:reward}, the goal becomes learning a policy (model) $\pi_\theta(\cdot \mid S_u, i_T)$ that generates high-reward paths. This is naturally framed as an exploration problem: the policy must search a combinatorially large space of candidate paths to discover those with high rewards. 
RL with policy gradient provides a principled framework for this reward-driven exploration. Specifically, we initialize $\pi_\theta$ with a policy $\pi_0$ pretrained via supervised learning on historical paths (see Appendix~\ref{app:prorl} for details). We then update this policy by iteratively sampling paths from $\pi_\theta$ and optimize via policy gradient ascent on the following objective:
\begin{equation}
J(\theta) = \mathbb{E}_{L_u \sim \pi_\theta(\cdot \mid S_u, i_T)}\bigl[R_{\text{path}}\bigr] - \lambda \cdot D_{\mathrm{KL}}\bigl(\pi_\theta \| \pi_0\bigr).
\label{eq:objective}
\end{equation}
The gradient of $J(\theta)$ consists of two parts: the reward term and the KL term. The KL term can be computed analytically given policy distributions, so we focus on estimating the reward term $\nabla_\theta \mathbb{E}_{\pi_\theta}[R]$. By the policy gradient theorem~\cite{sutton1999policy}, given $n$ inputs and $m$ sampled paths per input, the standard gradient estimator for $\nabla_\theta \mathbb{E}_{\pi_\theta}[R]$ is:
\begin{equation}
\hat{g}_{\mathrm{std}} = \frac{1}{nm}\sum_{i=1}^{n}\sum_{j=1}^{m}\left[\sum_{t=1}^{L^{(i,j)}} \nabla_\theta \log \pi_\theta^{(i, j, t)} \cdot R^{(i,j)}\right],
\label{eq:g-std}
\end{equation}
where $L^{(i,j)}$ is the path length, $R^{(i,j)}$ is the path reward, and $\pi_\theta^{(i, j, t)}$ denotes the probability. In theory, the policy progressively learns to generate higher-quality paths, moving beyond mere imitation of historical data toward reward-guided discovery. However, as we show next, this standard gradient estimation exhibits severe deficiencies when applied to PRS.

\subsection{The Length Shortcut}
\label{sec:analysis}
 
Having established the RL formulation, a natural approach is to directly optimize Eq.~\eqref{eq:objective} with the standard estimator $\hat{g}_{\mathrm{std}}$ (Eq.~\eqref{eq:g-std}). However, preliminary experiments reveal that this fails systematically across datasets and reward designs.
 
\textbf{Experimental Setup.} Following Section~\ref{sec:task}, we initialize the policy $\pi_\theta$ with a pretrained model $\pi_0$ and apply standard policy gradient optimization with $L_{\max}=10$. To isolate the effect of each reward component, we train three separate policies using CTR, IoI, and IoR as the sole reward signal respectively. For each configuration, we repeat the entire pipeline (pretraining + RL) five times and report averaged results. At each training step of RL, we compute two quantities over all rollouts across inputs in the batch: (1) \emph{path length}, the average number of generated items; (2) \emph{path diversity}, item-level Jaccard Similarity among paths.
 
\textbf{Empirical Observation.} Figure~\ref{fig:length-collapse} (top row) shows the training dynamics on MovieLens-1M and Amazon-Book. Across all reward configurations, we observe a consistent pattern: path length rapidly increases toward the maximum, while path diversity collapses toward nearly zero. Within a few hundred steps, the policy degenerates into generating nearly identical, maximum-length paths for all inputs. This degeneration is common: it occurs regardless of which reward component is used, suggesting a fundamental issue with standard policy gradient estimation in this setting.

\textbf{Root Cause: Length-Reward Coupling.} We trace this failure to a structural property of path rewards. We show that any reward function $R$ that maps a path to a scalar value admits a natural decomposition into step-level increments:
\begin{equation}
\begin{split}
R(i_1, \ldots, i_L) &= \sum_{t=1}^{L} r_t, \\
\text{where } r_t &:= R(i_1, \ldots, i_t) - R(i_1, \ldots, i_{t-1}).
\end{split}
\label{eq:reward_decomposition}
\end{equation}
This decomposition reveals a critical coupling: if the expected step reward $\mathbb{E}_\pi[r_t]$
\footnote{By expected step reward we actually mean $\mathbb{E}_\pi[r_t | L \ge t]$, since $r_t$ is only defined when the path reaches step $t$. For brevity, we write $\mathbb{E}_\pi[r_t]$ to denote $\mathbb{E}_\pi[r_t | L \ge t]$ when the context is clear.}
is non-zero, then the expected path reward becomes directly dependent on path length.
 
Figure~\ref{fig:length-collapse} (bottom row) empirically validates this. We compute $\mathbb{E}_{\pi}[r_t]$ by averaging step-level rewards across all rollouts collected during the experiments above. Across both datasets and all three reward components, we observe that step-level rewards exhibit \emph{consistently positive mean}. While IoR shows a slow decreasing trend, it remains positive throughout. This positive bias creates a systematic incentive: on average, longer paths yield higher rewards.

One might argue that if longer paths yield higher rewards, the optimal policy should indeed produce long paths. While the global optimum may well correspond to high-quality long paths, the issue lies in the optimization trajectory, not the optimum itself. In early training, the model encounters length variation far more frequently than quality variation among sampled paths. This enables rapid reward improvement through path extension without exploring diverse, high-quality paths. The model thus converges to a local optimum of lengthy but low-quality paths, never reaching the global optimum. Our ablation study (Section~\ref{sec:ablation_component}) confirms this: removing the length bias yields better final performance with more reasonable path length, demonstrating that the shortcut impedes rather than aids optimization.
 
\textbf{Theoretical Understanding.} Figure~\ref{fig:length-collapse} (top row) reveals a striking pattern: path length converges to $L_{\max}$ within a few hundred updates, long before the model learns effective item selection. This suggests that early gradients primarily shape the ``continue or stop'' decision, leaving ``which item'' to be learned later. To isolate the length mechanism, we consider a simplified model where $\pi_\theta$ stops at each step with a position-independent probability $p=\sigma(\theta)$. The total return $G=\sum_{t=1}^{\tau} r_t$ satisfies $\mathbb{E}[r_t|\tau\ge t]\ge \mu_{\min}>0$ for all $t$, where $\tau\le L_{\max}$ is the stopping time.
\begin{theorem}[Length Collapse Rate; informal]
\label{thm:collapse}
Under this setting, let $p(s)$ denote the stop probability under continuous-time gradient flow at training time $s$. Then $p(s) \to 0$ monotonically at rate $O(1/s)$, and the expected path length converges to $L_{\max}$.
\end{theorem}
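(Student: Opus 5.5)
We reduce the statement to a one-dimensional ODE in $\theta$ and analyze it directly. In the simplified model the stopping time $\tau$ is geometric with success probability $p=\sigma(\theta)$, truncated at $L_{\max}$. Concretely, $P(\tau\ge t)=(1-p)^{t-1}$ for $t\le L_{\max}$, and at step $L_{\max}$ the path is forced to stop.

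Because the step rewards satisfy $\mathbb{E}[r_t\mid\tau\ge t]=:\mu_t\ge\mu_{\min}$, and stopping decisions do not affect the conditional mean of already-realized increments, the objective reduces to a function of $p$ alone:
\begin{equation*}
J(p)=\mathbb{E}[G]=\sum_{t=1}^{L_{\max}}\mu_t\,(1-p)^{t-1}.
\end{equation*}
The KL term is dropped, consistent with isolating the reward-term gradient. The first step is to justify this formula. The length mechanism is assumed separable from item choice, so we treat the $\mu_t$ as independent of $\theta$.

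Next we differentiate. By the chain rule, $dJ/d\theta = J'(p)\,\sigma'(\theta)=J'(p)\,p(1-p)$, with
\begin{equation*}
J'(p)=-\sum_{t=2}^{L_{\max}}(t-1)\mu_t(1-p)^{t-2}.
\end{equation*}
This derivative is at most $-\mu_{\min}<0$ (taking $L_{\max}\ge2$), since the $t=2$ term alone gives $-\mu_2\le-\mu_{\min}$ and all other terms are nonpositive. Gradient flow $\dot\theta=dJ/d\theta$ then gives
\begin{equation*}
\dot p=p(1-p)\,\dot\theta=-p^2(1-p)^2\,\bigl|J'(p)\bigr|.
\end{equation*}
Hence $\dot p<0$ for $p\in(0,1)$, which already establishes monotone decrease.

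For the rate, the main obstacle is getting a two-sided bound on $\dot p$ that is uniform in $p$. Since $p(s)\le p(0)=:p_0<1$, we have $(1-p)^2\ge(1-p_0)^2$ and $|J'(p)|\ge\mu_{\min}$. This gives
\begin{equation*}
\dot p\le -c\,p^2,\qquad c=\mu_{\min}(1-p_0)^2>0.
\end{equation*}
Comparing with the Riccati equation $\dot q=-cq^2$ yields $1/p(s)\ge 1/p_0+cs$, so $p(s)\le 1/(1/p_0+cs)=O(1/s)$.

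For a matching lower bound, showing the rate is exactly order $1/s$ rather than faster, we need an upper bound on $|J'(p)|$. This requires assuming $\mu_t\le\mu_{\max}$, which gives $|J'|\le\mu_{\max}L_{\max}^2/2$. The same comparison argument then gives $p(s)\ge 1/(1/p_0+Cs)$, so $p(s)=\Theta(1/s)$. In the informal statement only the upper bound is needed.

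Finally, the expected path length is
\begin{equation*}
\mathbb{E}[\tau]=\sum_{t=1}^{L_{\max}}(1-p)^{t-1}.
\end{equation*}
This is continuous and decreasing in $p$, and equals $L_{\max}$ at $p=0$. Since $p(s)\to0$, it follows that $\mathbb{E}[\tau]\to L_{\max}$, with gap $L_{\max}-\mathbb{E}[\tau]\le p(s)\,L_{\max}^2/2=O(1/s)$.

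The one place we expect to need care is justifying the decoupling $\mathbb{E}[G]=\sum_t\mu_t P(\tau\ge t)$. This requires that the stop decision at step $t$ is independent of the realized rewards given $\tau\ge t$, which holds because in the simplified model the stop probability is position- and history-independent.
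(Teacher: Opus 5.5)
Your proof is correct and follows the paper's argument: the same expression $J=\sum_t\mu_t(1-p)^{t-1}$ with the $\mu_t$ held fixed, the same bound $J'(p)\le-\mu_2\le-\mu_{\min}$, and the same reciprocal (Riccati) comparison applied to $\dot p=p^2(1-p)^2J'(p)$. There is one real difference. You use monotonicity to get $(1-p)^2\ge(1-p_0)^2$ from time $0$, which gives $p(s)\le 1/(1/p_0+cs)$ and hence both $p\to 0$ and the rate at once, whereas the paper first proves $p\to0$ by contradiction and then works on $s\ge S_0$ where $p\le 1/2$; your matching lower bound (under $\mu_t\le\mu_{\max}$) and explicit length-gap estimate are extras the paper does not state.
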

Formal proof is in Appendix~\ref{app:length_collapse}. The $O(1/s)$ decay shows that when $\mathbb{E}[r_t|\tau\ge t]\ge \mu_{\min}>0$, gradient updates systematically reduce stopping probability $p$, making length collapse a structural consequence rather than a tuning artifact. We term this the \emph{length shortcut}.
 
\textbf{Implication.} The analysis suggests a principle for rectifying policy gradient in PRS: \emph{path extension should yield zero expected gain}. Under such condition, the length shortcut disappears and gradients must come from path quality. Section~\ref{sec:method} introduces our approach.

\begin{figure*}[t]
  \centering
  \includegraphics[width=0.9\linewidth]{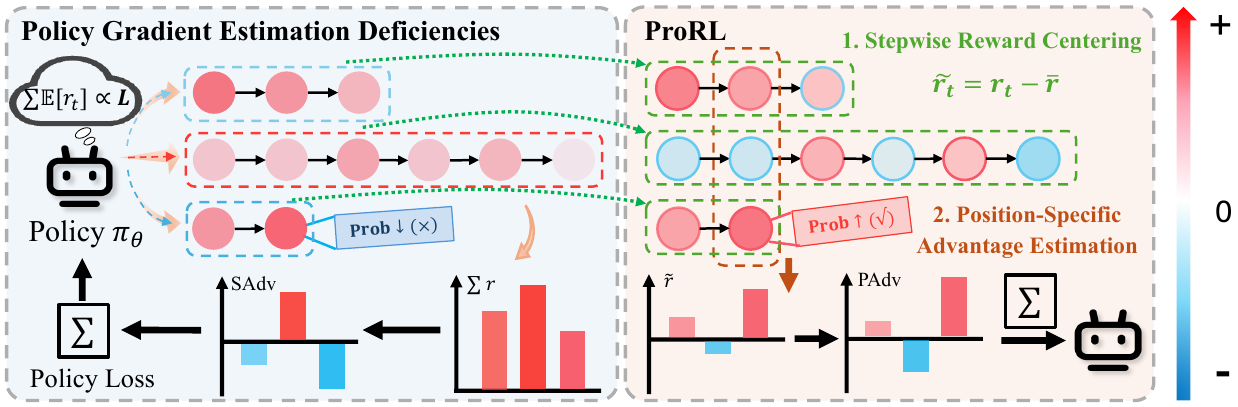}
  \caption{Overview of our proposed \textbf{ProRL}. (\textbf{Left}) Standard policy gradient estimation suffers from the length shortcut. Since step-level rewards accumulate with positive mean $(\sum_{t=1}^L \mathbb{E}[r_t] \propto L)$, the Sequence-level Advantage~(SAdv) and gradient signal are dominated by length variation, causing the model to extend paths rather than explore diverse alternatives. (\textbf{Right}) ProRL rectifies gradient estimation through two mechanisms. Stepwise Reward Centering ($\tilde{r}_t = r_t - \bar{r}$) ensures that path extension yields zero expected gain. Position-Specific Advantage Estimation (PAdv) computes step-adapted baselines for effective optimization.}
  \label{fig:method}
\end{figure*}

\section{Methodology}
\label{sec:method}
\subsection{Overview}
Section~\ref{sec:analysis} shows that standard policy gradient estimation fails in PRS due to the \emph{length shortcut}: path-level rewards decompose into step-level rewards with positive mean, causing length to dominate the gradient signal. Beyond this, the decomposition structure suggests an opportunity for improvement: standard estimation incurs high gradient variance by weighting each step with the entire path reward, which can be reduced through task-specific adaptation to the per-step reward structure. To address both issues, we propose ProRL with the following two mechanisms, which effectively rectify policy gradient estimation.

\textbf{Stepwise Reward Centering} (Section~\ref{sec:centering}) eliminates the length shortcut: by subtracting the expected reward at each step, we ensure that path extension yields \emph{zero expected gain}, redirecting gradient estimation toward path quality exploration rather than length manipulation.

\textbf{Position-Specific Advantage Estimation} (Section~\ref{sec:advantage}) reduces gradient variance: by computing step-adapted baselines that leverage the decomposition structure of path rewards, we obtain gradient estimates with lower variance. 

Together, these rectifications yield policy gradient estimation that achieves effective RL for PRS. Figure~\ref{fig:method} illustrates the complete framework.

\subsection{Stepwise Reward Centering}
\label{sec:centering}

By Eq.~\eqref{eq:reward_decomposition}, path-level rewards in PRS decompose as $R = \sum_{t=1}^{L} r_t$, where step-level rewards $r_t$ exhibit positive mean $\mathbb{E}_\pi[r_t]$. This couples expected return with path length, causing the length shortcut. Our design is to break this coupling: \emph{path extension should yield zero expected gain}.

We achieve this through reward centering. Empirically, we observe that $\mathbb{E}_{\pi}[r_t]$ remains relatively stable for many rewards (e.g., IoI; see Figure~\ref{fig:length-collapse}). For simplicity, we use a single global statistic $\bar{r}$ rather than 
step-specific estimates. We define the centered reward as: 
\begin{equation}
\tilde{r}_t = r_t - \bar{r}, \quad \text{where} \quad \bar{r} = \mathbb{E}_{\pi}\left[r_*\right].
\label{eq:centering}
\end{equation}
Here $\bar{r}$ is the global expected step reward, where the subscript ``$*$'' denotes any step. By construction, $\mathbb{E}_{\pi}[\tilde{r}_t] = 0$ for all $t$. Therefore, $\mathbb{E}\left[\sum_{t=1}^{L} \tilde{r}_t\right]$ is independent on path length $L$. The length shortcut is eliminated: the model cannot improve rewards by extending paths, and must instead explore deeply into path quality. In practice, we estimate $\bar{r}$ via online accumulation over rollouts of the first training epoch and freeze it for all subsequent epochs. We discuss alternatives to eliminating the length shortcut in Appendix~\ref{app:alternatives}.

\paragraph{Multi-Objective Reward.} Path quality in PRS involves multiple objectives. To handle this, suppose we have $K$ separate path-level rewards $\{R^{(i)}\}_{i=1}^{K}$, each decomposing into step-level rewards $R^{(i)}=\sum_t r_t^{(i)}$. Since these components have different scales, we extend \emph{centering} to \emph{normalization}:
\begin{equation}
\tilde{r}_t = \sum_{i=1}^{K} w_i \cdot \frac{r_t^{(i)} - \mu^{(i)}}{\sigma^{(i)}},
\label{eq:reward-normalize}
\end{equation}
where $\mu^{(i)} = \mathbb{E}_{\pi}\left[r_*^{(i)}\right]$ and $\sigma^{(i)} = \sqrt{\mathrm{Var}_{\pi}\left(r_*^{(i)}\right)}$ are estimated from rollouts during a warm-up epoch, avoiding the drift that would otherwise arise from co-evolving $\mu, \sigma$ and $\pi$ as the policy improves. The resulting normalization centers each component and rescales them to comparable magnitudes, enabling multi-objective optimization.

\subsection{Position-Specific Advantage Estimation}
\label{sec:advantage}

Stepwise Reward Centering eliminates the length shortcut, but effective training also requires low-variance gradient estimates. Recall from Section~\ref{sec:task} that the standard gradient estimator $\hat{g}_{\mathrm{std}}$ (Eq.~\eqref{eq:g-std}) weights each step's gradient by the total path reward $R^{(i,j)}$. However, the item at step $t$ only affects rewards from $t$ onward; including earlier rewards $r_1, \ldots, r_{t-1}$ introduces irrelevant noise.

We leverage the structural property that path-level rewards decompose into step-level rewards. For step $t$, we define the \emph{reward-to-go} $G_t^{(i,j)} = \sum_{\ell=t}^{L^{(i, j)}} r_\ell^{(i,j)}$ as the cumulative reward from $t$ onward. Replacing $R^{(i,j)}$ with $G_t^{(i,j)}$ excludes past rewards unaffected by the current action:
\begin{equation}
\hat{g}_{\mathrm{rtg}} = \frac{1}{nm}\sum_{i=1}^{n}\sum_{j=1}^{m}\left[\sum_{t=1}^{L^{(i,j)}} \nabla_\theta \log \pi_\theta^{(i,j,t)} \cdot G_t^{(i,j)}\right].
\label{eq:g-rtg}
\end{equation}
Variance can be reduced further by centering $G_t$ around its expected value. According to classical RL results~\cite{Williams92simple}, subtracting a baseline from the reward-to-go yields an \emph{advantage}, which is an unbiased and lower-variance estimate that measures relative quality rather than absolute return. Traditionally, this requires training an auxiliary critic model, adding complexity and computational cost.

Recent work on LLM alignment, notably GRPO~\cite{shao2024deepseekmath}, avoids the critic by using group Monte Carlo estimation: the baseline is simply the mean path reward across rollouts from the same input, $\bar{R}_i = \frac{1}{m}\sum_{j=1}^m R^{(i,j)}$. However, this path-level baseline is shared across all steps, ignoring that the expected reward-to-go varies by position.

Inspired by GRPO, we use the per-step reward structure of PRS to compute \emph{position-specific baseline} $\bar{G}_{i,t}$, the average reward-to-go at step $t$ across all paths from the $i$-th input that reach step $t$. The \emph{position-specific advantage} is then:
\begin{equation}
\bar{G}_{i,t} = \frac{\sum_{j: L^{(i,j)} \geq t} G_t^{(i,j)}}{\sum_{j=1}^{m} \mathbb{I}[L^{(i,j)} \geq t]}, \quad \hat{A}_t^{(i,j)} = G_t^{(i,j)} - \bar{G}_{i,t}.
\label{eq:step-advantage}
\end{equation}
Unlike GRPO's uniform baseline, each step $t$ has its own reference point $\bar{G}_{i,t}$, adapting to the expected future return at that position. Our rectified gradient estimator is:
\begin{equation}
\hat{g}_{\mathrm{rect}} = \frac{1}{nm}\sum_{i=1}^{n}\sum_{j=1}^{m}\left[\sum_{t=1}^{L^{(i,j)}} \nabla_\theta \log \pi_\theta^{(i,j,t)} \cdot \hat{A}_t^{(i,j)}\right].
\label{eq:g-rect}
\end{equation}
This design reduces variance via two well-established mechanisms from 
classical policy gradient literature~\citep{Williams92simple,sutton1999policy}. 
First, reward-to-go excludes past rewards $r_1, \ldots, r_{t-1}$ that are unaffected by the action at step $t$, removing irrelevant noise from the gradient signal. Second, the position-specific baseline $\bar{G}_{i,t}$ adapts to the expected future return at each position, providing a tighter reference than a path-level baseline. Both techniques are known to preserve unbiasedness while reducing gradient variance~\citep{GreensmithBB01}. Ablation study (Section~\ref{sec:ablation_gradient}) 
empirically validates the effectiveness of $\hat{g}_{\mathrm{rect}}$.
\section{Experiments}
\label{sec:exp}

\subsection{Experimental Setup}

\textbf{Datasets.} We conduct experiments on MovieLens-1M~\cite{harper2015movielens}, Steam~\cite{kang2018self}, and Amazon-Book~\cite{ni2019justifying}. We construct training data via splitting the raw data by user into training/validation/test sets (8:1:1). Details are in Appendix~\ref{app:data_process}.

\textbf{Baselines.} We compare with four categories of methods, including sequential recommendation method GRU4Rec~\cite{hidasi2015session}, BERT4Rec~\cite{sun2019bert4rec}, LightSANs~\cite{fan2021lighter}, and FEARec~\cite{du2023frequency}; the supervised proactive method IRN~\cite{zhu2023influential}; heuristic proactive methods IPG~\cite{bi2024proactive} and ITMPRec~\cite{lian2025itmprec}; LLM-based proactive methods LLM-IPP~\cite{wang2025leveraging} and T-PRA~\cite{wang2025tunable}. See Appendix~\ref{app:baselines} for details.

\textbf{Metrics.} Following prior work~\cite{bi2024proactive,wang2025leveraging,wang2025tunable}, we adopt Increment of Interest (IoI) and Increment of Rank (IoR) to measure the guidance effectiveness, and CTR~(i.e., HitRate) to measure the path feasibility. Coherence measures the semantic consistency between consecutive items in the path. Details are provided in Appendix~\ref{app:evaluation}.

\textbf{Implementation.} The detailed implementation process and hyperparameters are introduced in Appendix~\ref{app:implementation}.

\subsection{Overall Performance}

\begin{table*}[t]
\centering
  \caption{Proactive Recommendation performance of all models on different datasets (SASRec as evaluator) in terms of CTR~(i.e., HitRate), Coherence, IoI, and IoR. The best performances are highlighted in bold, and the second-best are underlined. The superscript * indicates the Improvement is statistically significant, where the p-value is less than 0.05.}\label{tab:overall comparison}
  \resizebox{\textwidth}{!}{
    \begin{tabular}{c c cccc cccc cccc}
      \toprule[1.5pt]
            \multicolumn{2}{c}{\textbf{Dataset}} & \multicolumn{4}{c}{\textbf{MovieLens-1M}} & \multicolumn{4}{c}{\textbf{Steam}} & \multicolumn{4}{c}{\textbf{Amazon-Book}} \\
            \cmidrule(lr){3-6} \cmidrule(lr){7-10} \cmidrule(lr){11-14}
            \multicolumn{2}{c}{\textbf{Model}} & CTR & Coherence & IoI & IoR & CTR & Coherence & IoI & IoR & CTR & Coherence & IoI & IoR\\
        \midrule
            \multicolumn{2}{c}{GRU4Rec}
            & 0.5143 & 0.3717 & 1.6345 & 77.08 & 0.4312 & 0.7026
            & -0.0239 & 15.40 & 0.5544 & 0.5838 & 0.0926 & 83.76 \\
            \multicolumn{2}{c}{Bert4Rec}
            & 0.5522 & 0.3889  & 1.3402  & 56.03  & \underline{0.4617}  & 0.7390 
            & -0.0055  & 22.00  & 0.5653  & 0.5591  & 0.1042  & 79.34  \\
            \multicolumn{2}{c}{LightSANs}
            & 0.5211  & 0.3957  & 1.6092  & 85.70  & 0.4215  & 0.7150 
            & -0.0204  & 21.64  & 0.5626  & 0.5934  & 0.2105  & 148.92  \\
            \multicolumn{2}{c}{FEARec}
            & 0.5159  & 0.3964 & 1.8770 & 139.85 & 0.4333  & 0.7177 
            & -0.0216  & 22.93 & 0.5536 & \underline{0.6020} & 0.3671 & 211.36 \\
        \hline
        \midrule
            \multicolumn{2}{c}{IRN}
            & \underline{0.8398} & 0.4706 & 1.7277 & 443.29 & 0.3524 & 0.6698 
            & -0.0481 & 29.66 & 0.4994 & 0.5477 & 0.3111 & 170.73 \\    
            \multicolumn{2}{c}{IPG}
            & 0.4463 & 0.3725 & 2.2537 & 169.28 & 0.2371 & 0.6740 
            & 0.1758 & 36.26 & 0.5015 & 0.5531 & 1.1067 & 469.68 \\
            \multicolumn{2}{c}{ITMPRec}
            & 0.4452 & 0.3714 & 2.2719 & 163.80 & 0.2381 & 0.6725 
            & 0.1804 & 34.50 & 0.5018 & 0.5540 & 1.0980 & 472.50 \\
            \multicolumn{2}{c}{LLM-IPP}
            & 0.6141 & \underline{0.6288} & 2.4680 & \underline{662.52} & 0.3108 & \underline{0.8022}
            & 0.0682 & 11.06 & \underline{0.5714} & 0.5132 & 1.6651 & 429.32 \\
            \multicolumn{2}{c}{T-PRA}
            & 0.4889 & 0.3415 & \underline{2.4867} & 355.16 & 0.2713 & 0.7399
            & \underline{0.3339} & \underline{62.04} & 0.5521 & 0.4418 & \underline{1.7261} & \underline{476.93} \\
        \hline
        \midrule   
            \rowcolor{tableours}
            \multicolumn{2}{c}{\textbf{ProRL (Ours)}}
            & $\mathbf{0.8543}^{*}$ & $\mathbf{0.8422}^{*}$ & $\mathbf{2.8504}^{*}$ & $\mathbf{728.18}^{*}$ & $\mathbf{0.5625}^{*}$ & $\mathbf{0.8707}^{*}$  & 
            $\mathbf{1.1188}^{*}$ & $\mathbf{340.18}^{*}$ & $\mathbf{0.8568}^{*}$ & $\mathbf{0.6775}^{*}$ & $\mathbf{2.9812}^{*}$ & $\mathbf{1383.41}^{*}$ \\
        
      \bottomrule[1.5pt]
      \end{tabular}
      }
\end{table*}%

\begin{table*}[t]
\centering
  \caption{Cross-evaluator analysis evaluated by the unseen Evaluator GRU4Rec. The best performances are highlighted in bold, and the second-best are underlined. The superscript * indicates the Improvement is statistically significant, where the p-value is less than 0.05.}\label{tab:gru4rec evaluator}
  \resizebox{\textwidth}{!}{
    \begin{tabular}{c c cccc cccc cccc}
      \toprule[1.5pt]
            \multicolumn{2}{c}{\textbf{Dataset}} & \multicolumn{4}{c}{\textbf{MovieLens-1M}} & \multicolumn{4}{c}{\textbf{Steam}} & \multicolumn{4}{c}{\textbf{Amazon-Book}} \\
            \cmidrule(lr){3-6} \cmidrule(lr){7-10} \cmidrule(lr){11-14}
            \multicolumn{2}{c}{\textbf{Model}} & CTR & Coherence & IoI & IoR & CTR & Coherence & IoI & IoR & CTR & Coherence & IoI & IoR\\
        \midrule
            \multicolumn{2}{c}{Bert4Rec}
            & 0.6112 & 0.3889 & 2.1632 & 50.18 & \underline{0.5847} & 0.7390
            & -0.2425 & 13.46 & 0.5921 & 0.5591 & 0.5643 & 82.85 \\
            \multicolumn{2}{c}{LightSANs}
            & 0.5771 & 0.3957 & 2.1908 & 67.18 & 0.5616 & 0.7150
            & -0.2705 & 12.17 & 0.5817 & 0.5934 & 0.5815 & 124.93 \\
            \multicolumn{2}{c}{FEARec}
            & 0.5585 & 0.3964 & 2.2509 & 83.90 & 0.5596 & 0.7177
            & -0.3271 & 10.89 & 0.5659 & \underline{0.6020} & 0.6102 & 140.87 \\
        \hline
        \midrule
            \multicolumn{2}{c}{IRN}
            & 0.7612 & 0.4706 & 2.2012 & 76.12 & 0.4890 & 0.6698
            & -0.2773 & 8.59 & 0.5529 & 0.5477 & 0.6637 & 82.36 \\
            \multicolumn{2}{c}{IPG}
            & 0.4276 & 0.3725 & 2.2409 & 96.24 & 0.3240 & 0.6740
            & -0.1084 & 31.44 & 0.5570 & 0.5531 & 0.6524 & 158.01 \\
            \multicolumn{2}{c}{ITMPRec}
            & 0.4425 & 0.3714 & 2.3068 & 104.35 & 0.3242 & 0.6725
            & -0.1044 & 33.93 & 0.5648 & 0.5540 & 0.6733 & 165.25 \\
            \multicolumn{2}{c}{LLM-IPP}
            & \underline{0.8331} & \underline{0.6288} & \underline{2.3693} & \underline{553.01} & 0.4543 & \underline{0.8022}
            & -0.0675 & 41.05 & 0.6012 & 0.5132 & 0.7921 & \underline{239.78} \\
            \multicolumn{2}{c}{T-PRA}
            & 0.4762 & 0.3415 & 2.3167 & 210.24 & 0.4217 & 0.7399
            & \underline{0.0523} & \underline{65.98} & \underline{0.6172} & 0.4418 & \underline{1.0762} & 207.89 \\
        \hline
        \midrule
            \rowcolor{tableours}
            \multicolumn{2}{c}{\textbf{ProRL (Ours)}}
            & $\mathbf{0.8460}^{*}$ & $\mathbf{0.8422}^{*}$ & $\mathbf{2.4560}^{*}$ & $\mathbf{649.26}^{*}$ & $\mathbf{0.6328}^{*}$ & $\mathbf{0.8707}^{*}$ &
            $\mathbf{0.2013}^{*}$ & $\mathbf{83.70}^{*}$ & $\mathbf{0.8832}^{*}$ & $\mathbf{0.6775}^{*}$ & $\mathbf{1.7650}^{*}$ & $\mathbf{1001.27}^{*}$ \\

      \bottomrule[1.5pt]
      \end{tabular}
      }
\end{table*}

Table~\ref{tab:overall comparison} shows that ProRL consistently achieves superior performance across all datasets, outperforming both traditional and state-of-the-art proactive recommendation methods. ProRL achieves the highest guidance effectiveness (IoI and IoR) and path feasibility (CTR and Coherence). Unlike heuristic or LLM-based methods that greedily optimize local objectives and often get trapped in local optima, ProRL directly optimizes the cumulative multi-objective reward over the entire path via rectified policy gradients, achieving both local feasibility and global effectiveness.

Notably, Coherence is not part of the reward function, yet ProRL substantially outperforms all baselines on this unrewarded metric, providing additional evidence that ProRL learns genuinely high-quality paths rather than overfitting to the training reward signal.

A common risk in RL is that the agent exploits specific patterns in the training environment, failing to generalize elsewhere. To verify that ProRL learns a generalized strategy rather than overfitting to the specific reward model (SASRec), we conduct a cross-evaluator analysis. To test this, we use three different recommendation models (GRU4Rec, LightSANs, and BERT4Rec) as unseen evaluators.

As shown in Table~\ref{tab:gru4rec evaluator}, the RL policy significantly improves both IoI and IoR. This gain is consistent not only on the original SASRec evaluator, but also on the unseen GRU4Rec evaluator. The consistent gains confirm that the performance boost is not due to overfitting or reward hacking, but rather that ProRL has learned generalizable principles of guidance that transfer across different user behavior models. Full results for additional evaluators are shown in Appendix~\ref{app:unseen evaluator}.

\subsection{Ablation Study}
\subsubsection{Ablation on Rectification Modules}
\label{sec:ablation_component}
ProRL introduces Stepwise Reward Centering (SRC) and Position-Specific Advantage Estimation (PSAE) to rectify policy gradient estimation. Table~\ref{tab:ablation_RL} presents the ablation results. A notable pattern emerges when SRC is removed (w/o SRC). The CTR on MovieLens-1M and Steam appears unusually high, even exceeding the full ProRL model, but at the cost of severe drops in guidance metrics (IoI, IoR). This anomaly arises because the CTR-based reward has a positive mean. Consequently, without centering, the optimization process is dominated by dense positive click feedback. As a result, the model over-optimizes short-term click probability while failing to capture the sparse, higher-order signals needed for effective guidance. SRC alleviates this bias by enforcing zero expected gain from path extension, thereby balancing optimization across objectives.

\begin{table}[t]
  \centering
  \caption{Ablation Studies on ProRL}
  \label{tab:ablation_RL}
  \resizebox{\linewidth}{!}{
    \begin{tabular}{llccc}
      \toprule[1.5pt]
      
      \textbf{Dataset} & \textbf{Model} & \textbf{CTR} & \textbf{IoI} & \textbf{IoR} \\
      \midrule
      \multirow[c]{3}{*}{MovieLens-1M}
        & w/o SRC      & \textbf{0.9731}  & 1.2373 & 649.96 \\
        & w/o PSAE      & 0.7456  & 2.5556 & 695.86 \\
        & ProRL & 0.8543  &\textbf{2.8504} & \textbf{728.18} \\
      \midrule
      \multirow[c]{3}{*}{Steam}
        & w/o SRC      & \textbf{0.9432}  & 0.3217 & 198.30 \\
        & w/o PSAE      & 0.6311  & 0.7280 & 244.78 \\
        & ProRL & 0.5625  & \textbf{1.1188} & \textbf{340.18} \\
      \midrule
      \multirow[c]{3}{*}{Amazon-Book}
        & w/o SRC      & 0.8361  & 1.8825 & 1002.46 \\
        & w/o PSAE      & 0.8404  & 2.5036 & 1223.78 \\
        & ProRL & \textbf{0.8568}  & \textbf{2.9812} & \textbf{1383.41} \\
      \bottomrule[1.5pt]
    \end{tabular}%
  }
\end{table}

\subsubsection{Ablation on Multi-Reward Design}
\label{sec:ablation_multi_rewards}
To investigate the contribution of each reward component, we conduct an ablation study across three datasets. As shown in Table~\ref{tab:ablation_multi_rewards}, the full ProRL model consistently achieves the best performance, validating the necessity of the multi-objective design.

While removing a specific term causes a primary drop in its corresponding metric, we observe degradation across all metrics in certain cases (e.g., w/o IoR on Amazon-Book). This suggests that our reward components are mutually reinforcing and collectively critical for effective policy learning.

\begin{table}[t!]
  \centering
  \caption{Ablation study on the multi-reward design}
  \label{tab:ablation_multi_rewards}
  \resizebox{\linewidth}{!}{
    \begin{tabular}{llccc}
      \toprule[1.5pt]
      
      \textbf{Dataset} & \textbf{Reward} & \textbf{CTR}  & \textbf{IoI} & \textbf{IoR} \\
      \midrule
      \multirow[c]{4}{*}{MovieLens-1M}
        & w/o Ctr & 0.7722  & 2.1287 & 640.12 \\
        & w/o IoI      & 0.7875  & 2.2272 & 663.21 \\
        & w/o IoR      & 0.8377  & 2.6794 & 665.22 \\
        & ProRL      & \textbf{0.8543}  & \textbf{2.8504} & \textbf{728.18} \\ 
      \midrule
      \multirow[c]{4}{*}{Steam}
        & w/o Ctr & 0.5113  & 1.0126 & 338.28 \\
        & w/o IoI      & 0.5325 & 0.2856 & 144.49 \\
        & w/o IoR      & 0.5223 & 0.4540 & 117.46 \\
        & ProRL      & \textbf{0.5625} & \textbf{1.1188} & \textbf{340.18} \\
      \midrule
      \multirow[c]{4}{*}{Amazon-Book}
        & w/o Ctr & 0.8097 & 2.8217 & 1363.77 \\
        & w/o IoI      & 0.8359 & 2.3117 & 1261.61 \\
        & w/o IoR      & 0.7592 & 1.1488 & 125.94 \\
        & ProRL       & \textbf{0.8568} & \textbf{2.9812} & \textbf{1383.41} \\
      \bottomrule[1.5pt]
    \end{tabular}%
  }
\end{table}

\begin{table}[t!]
\centering
  \caption{{Analysis of gradient estimators on ML-1M. We report final performance, average path length at training Epochs 1, 5, 10, and advantage variance (normalized to Epoch~1 of RF).}}
  \label{tab:stability}
  \resizebox{\linewidth}{!}{
    \begin{tabular}{l |ccc| ccc| ccc}
      \toprule[1.5pt]
      & \multicolumn{3}{c}{\textbf{Performance}} & \multicolumn{3}{c}{\textbf{Avg. Path Length}} & \multicolumn{3}{c}{\textbf{Advantage Variance}} \\
      \cmidrule(lr){2-4} \cmidrule(lr){5-7} \cmidrule(lr){8-10}
      \textbf{Method} & \textbf{CTR} & \textbf{IoI} & \textbf{IoR} & \textbf{E1} & \textbf{E5} & \textbf{E10} & \textbf{E1} & \textbf{E2} & \textbf{E3} \\
      \midrule
      RF    & 0.581 & 1.626 & 329.8 & 5.2 & 2.9 & 1.5 & 1.00$\times$ & 1.18$\times$ & 0.94$\times$ \\
      GRPO  & 0.633 & 1.483 & 284.9 & 10.0 & 10.0 & 10.0 & 0.22$\times$ & 0.21$\times$ & 0.19$\times$ \\
      A2C   & 0.857 & 1.695 & 527.5 & 1.8 & 4.7 & 5.3 & 0.09$\times$ & 0.12$\times$ & 0.17$\times$ \\
      RTG   & 0.694 & 2.383 & 675.7 & 1.5 & 3.4 & 4.1 & 0.12$\times$ & 0.11$\times$ & 0.10$\times$ \\
      \midrule
      \textbf{ProRL} & \textbf{0.854} & \textbf{2.850} & \textbf{728.2} & \textbf{1.6} & \textbf{3.1} & \textbf{3.8} & $\mathbf{0.06\times}$ & $\mathbf{0.05\times}$ & $\mathbf{0.05\times}$ \\
      \bottomrule[1.5pt]
    \end{tabular}
  }
\end{table}

\subsubsection{Ablation on Gradient Estimators}
\label{sec:ablation_gradient}

To validate the effectiveness of position-specific advantage estimation (Section~\ref{sec:advantage}), we compare five gradient estimators under identical reward normalization (Eq.~\eqref{eq:reward-normalize}), isolating the effect of the advantage method. The estimators are REINFORCE (RF, Eq.~\eqref{eq:g-std}), reward-to-go (RTG, Eq.~\eqref{eq:g-rtg}), GRPO (path-level baseline), A2C (\citet{pmlr-v48-mniha16}, see Appendix~\ref{app:a2c} for details), and ProRL (Eq.~\eqref{eq:g-rect}).

Figure~\ref{fig:gradient-ablation} shows training dynamics, where all curves report test metrics. ProRL achieves the best overall performance with steady improvement across all metrics. Table~\ref{tab:stability} further jointly analyzes final performance, path length stability during training, and gradient variance on ML-1M. {ProRL achieves the highest guidance metrics, substantially surpassing both GRPO and A2C in guidance effectiveness. The path length and variance columns in Table~\ref{tab:stability} reveal the mechanism behind these performance differences. We track average path lengths at training Epochs 1, 5, and 10. RF and GRPO exhibit opposite failure modes, with RF collapsing to length 1.5 while GRPO saturates at $L_{\max}=10$ throughout training. A2C shows moderate but unstable growth. In contrast, ProRL and RTG consistently converge to stable, moderate lengths around 3 to 4 steps. This stability is directly tied to gradient variance. ProRL achieves the lowest variance ($\sim$5\% of RF at Epoch~1), and RTG also maintains low variance, corroborating both methods' length stability. A key finding is that A2C's variance \emph{increases} over training ($0.09\times \to 0.17\times$), as its learned critic fails to track the evolving policy and produces progressively noisier baselines. ProRL's analytic baseline (Eq.~\eqref{eq:step-advantage}), computed directly from rollout statistics, adapts naturally without this drift.}

\begin{figure*}[t]
  \centering
  \includegraphics[width=0.95\linewidth]{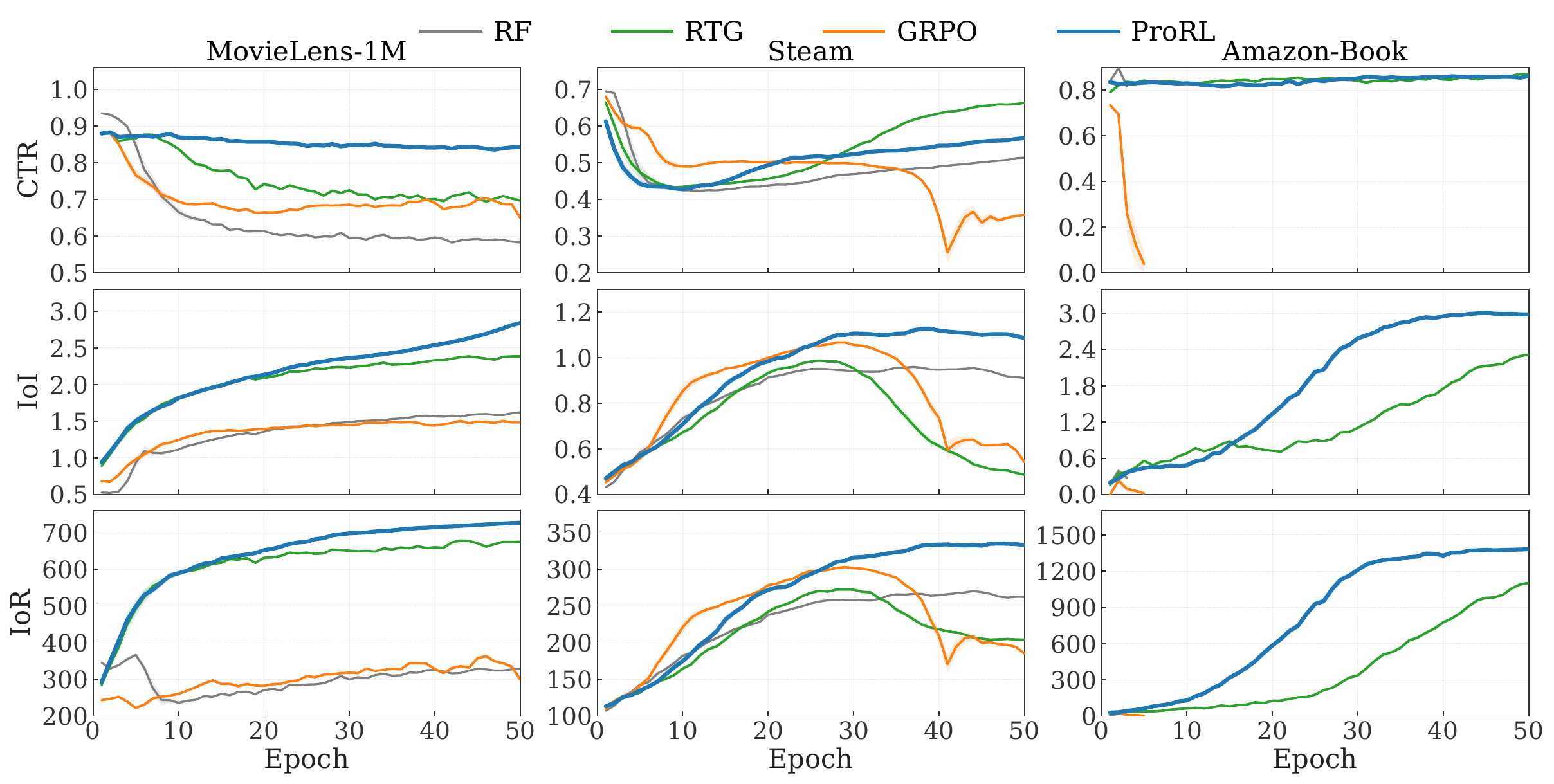}
  \caption{Training dynamics of gradient estimators on MovieLens-1M, Steam, and Amazon-Book. All methods use identical multi-objective rewards with the same weights; only the gradient estimator differs. RF: REINFORCE (Eq.~\eqref{eq:g-std}); RTG: reward-to-go (Eq.~\eqref{eq:g-rtg}); GRPO: REINFORCE with path-level baseline; ProRL: position-specific advantage (Eq.~\eqref{eq:g-rect}). ProRL achieves the best balance between path feasibility (CTR) and guidance effectiveness (IoI, IoR), while RTG sacrifices guidance metrics for higher CTR.}
  \label{fig:gradient-ablation}
\end{figure*}

\begin{table}[t!]
  \centering
  \caption{Experimental Results with Different Training Stages}
  \label{tab:ablation_stages}
  \resizebox{\linewidth}{!}{
    \begin{tabular}{llccc}
      \toprule[1.5pt]
      \textbf{Dataset} & \textbf{Model} & \textbf{CTR} & \textbf{IoI} & \textbf{IoR} \\
      \midrule
      \multirow[c]{2}{*}{MovieLens-1M}
        & Pretrain & \textbf{0.8671}  & 0.8600 & 254.43 \\
        & RL       & 0.8543  & \textbf{2.8504} & \textbf{728.18} \\
      \midrule
      \multirow[c]{2}{*}{Steam}
        & Pretrain & \textbf{0.7453} & 0.4230 & 101.16 \\
        & RL       & 0.5625  & \textbf{1.1188} & \textbf{340.18} \\
      \midrule
      \multirow[c]{2}{*}{Amazon-Book}
        & Pretrain & 0.6410  & 0.1650 & 72.92 \\
        & RL        & \textbf{0.8568}  & \textbf{2.9812} & \textbf{1383.41} \\
      \bottomrule[1.5pt]
    \end{tabular}%
  }
\end{table}

\begin{table}[t]
\centering
\caption{Capacity analysis of pretrained model via Rollout@K.}
\label{tab:rollout_compact}
\resizebox{\linewidth}{!}{
\begin{tabular}{l ccc ccc}
\toprule[1.5pt]
\multirow{2}{*}{\textbf{Dataset}} & \multicolumn{3}{c}{\textbf{Max-IoI}} & \multicolumn{3}{c}{\textbf{Max-IoR}} \\
\cmidrule(lr){2-4} \cmidrule(lr){5-7}
 & \textbf{@1} & \textbf{@5} & \textbf{@10} & \textbf{@1} & \textbf{@5} & \textbf{@10} \\
\midrule
MovieLens-1M & 1.1347 & 2.7779 & 3.3585 & 294.53 & 717.69 & 851.03 \\
Steam & 0.2395 & 1.8728 & 2.4803 & 57.89 & 818.11 & 1074.35 \\
Books & 0.1523 & 2.2524 & 3.0780 & 52.47 & 1132.01 & 1509.70 \\
\bottomrule[1.5pt]
\end{tabular}
}
\end{table}

\subsection{Quantitative Analysis of Training Stages}
\label{sec:multi-stages}
To understand the evolution from pretraining to RL, we conduct a two-stage analysis. We first evaluate performance at each stage, then investigate the mechanism behind improvement by probing the pretrained model's latent capacity.

\textbf{The Leap from Feasibility to Effectiveness.} 
As shown in Table~\ref{tab:ablation_stages}, the pretrained model achieves high CTR, establishing a foundation for path feasibility. However, its guidance effectiveness remains limited. The RL stage breaks this bottleneck. By shifting the objective from likelihood maximization to cumulative reward maximization, RL improves guidance effectiveness while maintaining path feasibility. This confirms that rectified policy gradients (SRC + PSAE) enable effective RL optimization that discovers high-quality paths beyond the pretraining distribution.

\textbf{Mechanisms for Eliciting Pre-existing Capabilities.} 
The dramatic gain in effectiveness raises a question: \textit{Does RL impart new capabilities, or unlock potential already existing in the pretrained model?}

To answer this, we probe the latent capacity of the fixed pretrained model using \textbf{Rollout@K} analysis. We sample $K$ paths for each input from the pretrained model and record the maximum IoI/IoR achieved. As shown in Table~\ref{tab:rollout_compact}, while greedy generation (@1) is weak, the latent potential (@10) is remarkably high, often matching RL's final performance. This reveals that our RL stage actually functions as a \textbf{probabilistic rectifier}, identifying high-quality guidance paths in the low-probability tail of the pretrained distribution and redistributing probability mass towards them.
\section{Related Work}
\paragraph{Sequential Recommendation.}
Sequential recommendation models user history to predict future behaviors. GRU4Rec~\cite{hidasi2015session} pioneered RNNs for temporal modeling. SASRec~\cite{kang2018self} adapts self-attention with causal masking, whereas BERT4Rec~\cite{sun2019bert4rec} employs bidirectional objectives to deepen context understanding. Recent advances optimize this backbone. LightSANs~\cite{fan2021lighter} introduces low-rank decomposed attention for linear scalability, and FEARec~\cite{du2023frequency} leverages frequency domain learning for multi-scale information. However, these methods focus on fitting historical preferences and fail to shift user preferences.

\paragraph{Proactive Recommendation.}
Proactive recommendations aim to shift user preferences toward target items. IRN~\cite{zhu2023influential} introduces a Transformer-based supervised method with a Personalized Impressionability Mask to model user receptiveness. IPG~\cite{bi2024proactive} selects intermediate items by jointly evaluating local feasibility and guidance effectiveness via predefined heuristics, while ITMPRec~\cite{lian2025itmprec} further incorporates intention-level features for finer-grained characterization. More recently, LLM-IPP~\cite{wang2025leveraging} leverages LLMs via Chain-of-Thought for path planning, and T-PRA~\cite{wang2025tunable} employs an LLM-based Actor-Critic framework. However, supervised methods cannot explore paths beyond historical data; heuristic methods greedily optimize local objectives and often yield suboptimal paths; and LLM-based methods incur prohibitive deployment costs.
\section{Conclusion}
We present ProRL, a reinforcement learning framework for proactive 
recommendation via rectified policy gradient estimation. Our analysis reveals two deficiencies in standard policy gradient estimation for PRS: the \emph{length shortcut} and \emph{high gradient variance}. To address these issues, we introduce two rectifications. Stepwise Reward Centering eliminates the length shortcut by ensuring path extension yields zero expected gain, while Position-Specific Advantage Estimation reduces variance by exploiting reward decomposition. Together, these rectifications yield policy gradients that align with target path quality, enabling effective optimization of both feasibility and effectiveness. Experiments on three real-world datasets confirm that ProRL significantly outperforms state-of-the-art methods, and cross-evaluator analysis validates that the learned guidance strategy generalizes beyond the training reward model.
\section*{Acknowledgment}

This work was supported by the Chinese NSF General Program (No.62572129).

\section*{Impact Statement}

This paper contributes to the field of proactive recommendation by introducing a reinforcement learning-based guidance framework. Our work aims to enhance the capability of recommender systems to proactively assist users in exploring new interests or achieving specific goals. While our method focuses on optimizing guidance efficiency, we acknowledge the importance of aligning such proactive strategies with user utility and ethical standards to ensure a positive user experience.

\bibliographystyle{icml2026}
\bibliography{example_paper}

\newpage
\appendix
\onecolumn

\section{Theoretical Analysis}
\label{app:theory}

This section provides the formal statement and complete proof of Theorem~\ref{thm:collapse}, which is informally presented in Section~\ref{sec:analysis} of the main text.

\subsection{Length Collapse: Formal Statement and Proof of Theorem~\ref{thm:collapse}}
\label{app:length_collapse}

Section~\ref{sec:analysis} presents an informal statement of Theorem~\ref{thm:collapse}, which characterizes how gradient updates systematically reduce stopping probability when step-level rewards have positive mean. Here we provide the formal statement and complete proof.

\paragraph{Setup.} To isolate the length mechanism from item selection, we consider a simplified model where the policy makes only \texttt{continue} or \texttt{stop} decisions. At each step $t \in \{1, \ldots, L_{\max}\}$, the policy stops with probability $p = \sigma(\theta) \in (0,1)$, where $\sigma(\cdot)$ is the sigmoid function and $\theta \in \mathbb{R}$ is the learnable parameter. Let $\tau \leq L_{\max}$ denote the stopping time, and define the total return as
\begin{equation}
    G = \sum_{t=1}^{\tau} r_t,
\end{equation}
where $r_t$ is the step-level reward at time $t$. The objective is $J(\theta) = \mathbb{E}_{\pi_\theta}[G]$. For this stylized analysis, we treat the conditional means $\mathbb{E}[r_t \mid \tau \ge t]$ as fixed (i.e., not depending on $\theta$), so that $J(\theta)$ can be analyzed as a function of $p=\sigma(\theta)$.

\begin{theorem}[Length Collapse Rate]\label{thm:collapse_formal}
Suppose the expected step-level reward satisfies $\mathbb{E}[r_t \mid \tau \geq t] \geq \mu_{\min} > 0$ for all $t \in \{1, \ldots, L_{\max}\}$. Consider gradient flow dynamics
\begin{equation}
    \frac{\mathrm{d}\theta(s)}{\mathrm{d}s} \;=\; \frac{\mathrm{d}J}{\mathrm{d}\theta}\bigl(\theta(s)\bigr),
    \qquad \theta(0)=\theta_0,
\end{equation}
and let $p(s)=\sigma(\theta(s))$. Then:
\begin{enumerate}
    \item $p(s)$ is strictly decreasing and $p(s) \to 0$ as $s \to \infty$;
    \item There exist constants $S, K > 0$ such that for all $s \geq S$,
    \begin{equation}
        p(s) \leq \frac{K}{s}.
    \end{equation}
\end{enumerate}
Consequently, the expected path length $\mathbb{E}[\tau] \to L_{\max}$ as $s \to \infty$.
\end{theorem}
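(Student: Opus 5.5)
We plan to reduce everything to a one-dimensional ODE in $p$. Write $\mu_t := \mathbb{E}[r_t \mid \tau \ge t]$, which is treated as fixed with $\mu_t \ge \mu_{\min} > 0$. The first step is to express the objective explicitly. Under the stopping model the policy always emits step $1$, and it reaches step $t$ only if it did not stop at steps $1,\dots,t-1$. Hence $\Pr(\tau \ge t) = (1-p)^{t-1}$ for $t \le L_{\max}$, with a forced stop at $L_{\max}$. By linearity (the tower property),
\begin{equation}
J(p) = \sum_{t=1}^{L_{\max}} \mu_t (1-p)^{t-1}.
\end{equation}
Differentiating gives
\begin{equation}
J'(p) = -\sum_{t=2}^{L_{\max}} (t-1)\mu_t (1-p)^{t-2},
\end{equation}
so $J'(p) \le -\mu_{\min} < 0$ on $(0,1)$, using only the $t=2$ term. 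This assumes $L_{\max} \ge 2$; otherwise the statement is trivial.

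Next we use the chain rule through $\theta$, with $\mathrm{d}p/\mathrm{d}\theta = p(1-p)$. This gives $\dot\theta = J'(p)\,p(1-p)$, and therefore
\begin{equation}
\dot p = p(1-p)\,\dot\theta = J'(p)\,p^2(1-p)^2 < 0 .
\end{equation}
This yields strict monotone decrease of $p(s)$. Since $p$ is bounded below by $0$, it converges to some $p_\infty \in [0, p_0)$.

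Suppose $p_\infty > 0$. On $[p_\infty, p_0]$ we have $\dot p \le -\mu_{\min} p_\infty^2 (1-p_0)^2 < 0$, a uniform negative bound, so $p$ would decrease without bound, a contradiction. Hence $p(s) \to 0$, which proves part 1. Note that $1-p(s) \ge 1-p_0 > 0$ for all $s$ by monotonicity.

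For the rate in part 2, we use the Riccati-type inequality
\begin{equation}
\dot p \le -c\,p^2, \qquad c := \mu_{\min}(1-p_0)^2 .
\end{equation}
Dividing by $p^2$ gives $\tfrac{\mathrm{d}}{\mathrm{d}s}(1/p) \ge c$, so $1/p(s) \ge 1/p_0 + cs$. Therefore $p(s) \le 1/(cs)$ for all $s > 0$, and we can take $K = 1/c$ with any $S > 0$.

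The step that needs the most care is the justification of the formula for $J$. It relies on $\mathbb{E}[G] = \sum_t \Pr(\tau \ge t)\,\mathbb{E}[r_t \mid \tau \ge t]$ together with the stylized assumption that the $\mu_t$ do not depend on $\theta$. We will state this explicitly. We will also note that the pure-Bernoulli form is exactly what makes $\Pr(\tau \ge t)$ polynomial in $1-p$.

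Finally, $\mathbb{E}[\tau] = \sum_{t=1}^{L_{\max}} (1-p)^{t-1}$ is a continuous function of $p$. As $p \to 0$ it tends to $L_{\max}$, which gives the stated consequence.
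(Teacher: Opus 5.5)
Your proof is correct and follows essentially the same route as the paper: the explicit $J(p)=\sum_t \mu_t(1-p)^{t-1}$, the bound $J'(p)\le-\mu_2\le-\mu_{\min}$, the identity $\dot p = J'(p)\,p^2(1-p)^2$, and integrating $1/p$. The only difference is that you bound $(1-p(s))^2\ge(1-p_0)^2$ by monotonicity instead of waiting until $p(s)\le 1/2$, which gives the explicit bound $p(s)\le 1/\bigl(\mu_{\min}(1-p_0)^2 s\bigr)$ for all $s>0$ and makes the separate contradiction argument for $p\to 0$ unnecessary; one small correction is that for $L_{\max}=1$ the statement is not trivial but false ($J$ is constant, so $p\equiv p_0$), so $L_{\max}\ge 2$ is a genuine hypothesis, which the paper also uses implicitly.
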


\begin{proof}
We proceed in four steps.

\paragraph{Step 1: Express $J$ as a function of $p$.} Define the event
\begin{equation}
    E_t := \{\tau \geq t\} = \{a_1 = \cdots = a_{t-1} = \texttt{cont}\},
\end{equation}
representing that the policy has not stopped before step $t$. Under the homogeneous stopping policy, $\mathbb{P}(E_t) = (1-p)^{t-1}$. By the tower property,
\begin{equation}
    J \;=\; \sum_{t=1}^{L_{\max}} \mathbb{E}[r_t \mathbf{1}_{E_t}]
    \;=\; \sum_{t=1}^{L_{\max}} \mathbb{P}(E_t)\,\mathbb{E}[r_t \mid E_t]
    \;=\; \sum_{t=1}^{L_{\max}} \mu_t (1-p)^{t-1},
\end{equation}
where $\mu_t := \mathbb{E}[r_t \mid E_t] \geq \mu_{\min} > 0$ by assumption. Under the setup above, $\{\mu_t\}$ are treated as constants, so $J$ can be viewed as a function of $p$.

\paragraph{Step 2: Show $\mathrm{d}J/\mathrm{d}\theta < 0$.} Differentiating with respect to $p$:
\begin{equation}
    \frac{\mathrm{d}J}{\mathrm{d}p}
    \;=\; -\sum_{t=2}^{L_{\max}} (t-1)\mu_t (1-p)^{t-2}
    \;\leq\; -\mu_{\min} \sum_{t=2}^{L_{\max}} (t-1)(1-p)^{t-2}
    \;<\; 0
\end{equation}
for $p \in (0,1)$. Since
\begin{equation}
    \frac{\mathrm{d}p}{\mathrm{d}\theta} = p(1-p) > 0,
\end{equation}
the chain rule gives
\begin{equation}
    \frac{\mathrm{d}J}{\mathrm{d}\theta}
    \;=\; \frac{\mathrm{d}J}{\mathrm{d}p}\cdot \frac{\mathrm{d}p}{\mathrm{d}\theta}
    \;<\; 0.
\end{equation}
Under gradient ascent flow, $\frac{\mathrm{d}\theta(s)}{\mathrm{d}s} = \frac{\mathrm{d}J}{\mathrm{d}\theta}<0$, so $\theta(s)$ is strictly decreasing, and consequently $p(s)=\sigma(\theta(s))$ is strictly decreasing.

\paragraph{Step 3: Prove $p(s) \to 0$.} Since $p(s)\in(0,1)$ is monotonically decreasing, the limit $p_\infty := \lim_{s \to \infty} p(s)$ exists with $p_\infty \in [0,1)$. Suppose for contradiction that $p_\infty > 0$. 

Because $p(s)\to p_\infty$, there exists $S_1$ such that for all $s \geq S_1$,
\begin{equation}
    |p(s)-p_\infty| \le \min\Bigl\{\frac{p_\infty}{2},\,\frac{1-p_\infty}{2}\Bigr\}.
\end{equation}
In particular, for all $s\ge S_1$ we have
\begin{equation}
    p(s)\ge \frac{p_\infty}{2}
    \qquad\text{and}\qquad
    1-p(s)\ge \frac{1-p_\infty}{2},
\end{equation}
hence
\begin{equation}
    p(s)\bigl(1-p(s)\bigr)\ge \frac{p_\infty}{2}\cdot \frac{1-p_\infty}{2}.
\end{equation}

Note that for all $p\in(0,1)$,
\begin{equation}
    \frac{\mathrm{d}J}{\mathrm{d}p}
    \;=\; -\sum_{t=2}^{L_{\max}} (t-1)\mu_t (1-p)^{t-2}
    \;\leq\; -\mu_2
    \;\leq\; -\mu_{\min}.
\end{equation}
Thus, for $s \geq S_1$,
\begin{equation}
    \frac{\mathrm{d}\theta(s)}{\mathrm{d}s}
    \;=\; \frac{\mathrm{d}J}{\mathrm{d}\theta}
    \;=\; \frac{\mathrm{d}J}{\mathrm{d}p}\bigl(p(s)\bigr)\cdot \frac{\mathrm{d}p}{\mathrm{d}\theta}\bigl(\theta(s)\bigr)
    \;\leq\; -\mu_{\min}\cdot \frac{p_\infty}{2}\cdot \frac{1-p_\infty}{2}
    \;=:\; -c \;<\; 0.
\end{equation}
This implies $\theta(s) \to -\infty$ as $s\to\infty$, hence $p(s)=\sigma(\theta(s))\to 0$, contradicting $p_\infty>0$. Therefore $p_\infty=0$.

\paragraph{Step 4: Establish the $O(1/s)$ convergence rate.} By the chain rule,
\begin{equation}
    \frac{\mathrm{d}p(s)}{\mathrm{d}s}
    \;=\; \frac{\mathrm{d}p}{\mathrm{d}\theta}\cdot \frac{\mathrm{d}\theta}{\mathrm{d}s}
    \;=\; \frac{\mathrm{d}p}{\mathrm{d}\theta}\cdot \frac{\mathrm{d}J}{\mathrm{d}\theta}
    \;=\; \frac{\mathrm{d}p}{\mathrm{d}\theta}\cdot \frac{\mathrm{d}J}{\mathrm{d}p}\cdot \frac{\mathrm{d}p}{\mathrm{d}\theta}
    \;=\; p(s)^2(1-p(s))^2\cdot \frac{\mathrm{d}J}{\mathrm{d}p}\bigl(p(s)\bigr).
\end{equation}
Using $\frac{\mathrm{d}J}{\mathrm{d}p} \leq -\mu_{\min}$,
\begin{equation}
    \frac{\mathrm{d}p(s)}{\mathrm{d}s}
    \;\leq\; -\mu_{\min}\, p(s)^2(1-p(s))^2.
\end{equation}
Since $p(s) \to 0$, there exists $S_0$ such that $p(s) \leq 1/2$ for all $s \geq S_0$, giving $(1-p(s))^2 \geq 1/4$. Thus, for $s \geq S_0$,
\begin{equation}
    \frac{\mathrm{d}p(s)}{\mathrm{d}s}
    \;\leq\; -\frac{\mu_{\min}}{4}\, p(s)^2.
\end{equation}
Define $q(s):=1/p(s)$. Then
\begin{equation}
    \frac{\mathrm{d}q(s)}{\mathrm{d}s}
    \;=\; -\frac{1}{p(s)^2}\frac{\mathrm{d}p(s)}{\mathrm{d}s}
    \;\geq\; \frac{\mu_{\min}}{4}.
\end{equation}
Integrating from $S_0$ to $s$ yields
\begin{equation}
    \frac{1}{p(s)} \;\geq\; \frac{1}{p(S_0)} + \frac{\mu_{\min}}{4}(s - S_0),
\end{equation}
which implies
\begin{equation}
    p(s) \;\leq\; \frac{4}{\mu_{\min}(s - S_0)}.
\end{equation}
Setting $S = S_0 + 1$ and $K = 4S/\mu_{\min}$, we obtain $p(s) \leq K/s$ for all $s \geq S$.
\end{proof}

\paragraph{Implication.} The $O(1/s)$ decay rate demonstrates that length collapse is a structural consequence of positive step-level reward means, not a tuning artifact. Under standard policy gradient updates, the stopping probability vanishes at a polynomial rate, causing path length to converge to $L_{\max}$ regardless of path quality. This motivates Stepwise Reward Centering (Section~\ref{sec:centering}), which enforces zero-mean stepwise gains and thereby avoids length collapse.

\section{Data Construction and Implementation}
\label{app:data}

This section describes the data construction pipeline and implementation details of ProRL. We first clarify how our implementation relates to the item-level formulation (Section~\ref{app:semantic_id}). We then present the dataset statistics (Section~\ref{app:dataset}), the training data construction process (Section~\ref{app:data_process}), and the semantic tokenization procedure (Section~\ref{app:semantic}).

\subsection{Implementation via Semantic IDs}
\label{app:semantic_id}

The item-level formulation in Section~\ref{sec:task} provides a conceptual framework where each action selects an item $i \in \mathcal{I}$. In practice, we instantiate this framework using \textbf{semantic IDs}, a widely adopted technique in generative recommendation~\cite{zhai2024actions,hou2025surveygenerativerecommendationdata}. This subsection clarifies the relationship between the conceptual formulation and our implementation, and establishes their theoretical compatibility.

\paragraph{Semantic ID Representation.} Each item $i$ is represented as a sequence of $K$ discrete tokens $(c_1^i, c_2^i, \ldots, c_K^i)$ via Residual Quantized VAE (detailed in Section~\ref{app:semantic}). In our experiments, $K=4$. The policy $\pi_\theta$ autoregressively generates these tokens, producing a sequence of $K \cdot L + 1$ tokens (including the EOS token) that decodes to a path of $L$ items.

\paragraph{Theoretical Compatibility.} The semantic ID implementation is fully compatible with the item-level framework presented in the main text:
\begin{itemize}
    \item When $K=1$, the two formulations are identical.
    \item When $K>1$, generating an item $(c_1, \ldots, c_K)$ can be viewed as a single composite action in the item-level formulation.
\end{itemize}

Formally, let $\tilde{\pi}_\theta(i \mid s)$ denote the probability of generating item $i$ under the semantic ID policy. This probability is given by:
\begin{equation}
\label{eq:semantic_id_prob}
\tilde{\pi}_\theta(i \mid s) = \prod_{j=1}^{K} \pi_\theta(c_j^i \mid s, c_1^i, \ldots, c_{j-1}^i).
\end{equation}

The item-level reward $r_t$ and advantage $\hat{A}_t$ (Eq.~\eqref{eq:centering} and Eq.~\eqref{eq:step-advantage}) are computed at the item level after decoding. During backpropagation, the gradient is distributed to all $K$ tokens of item $i_t$:
\begin{equation}
\label{eq:gradient_distribution}
\nabla_\theta \log \tilde{\pi}_\theta(i_t \mid s_t) \cdot \hat{A}_t = \sum_{j=1}^{K} \nabla_\theta \log \pi_\theta(c_j^{i_t} \mid \cdot) \cdot \hat{A}_t.
\end{equation}

That is, \textbf{all tokens within the same item share the item-level advantage}, preserving the semantics of our rectified policy gradient estimator (Eq.~\eqref{eq:g-rect}).

\paragraph{Benefits of Semantic IDs.} This implementation offers practical advantages:
\begin{enumerate}
    \item \textit{Reduced action space}: The vocabulary size reduces from $|\mathcal{I}|$ items to $|\mathcal{C}|$ codebook entries, where typically $|\mathcal{C}| \ll |\mathcal{I}|$.
    \item \textit{Semantic generalization}: Items with similar semantics share token prefixes, enabling better generalization.
    \item \textit{Compatibility with Sequence-to-Sequence}: Standard encoder-decoder Transformers naturally handle token sequences.
\end{enumerate}

Crucially, the theoretical analysis in Section~\ref{sec:analysis} and Appendix~\ref{app:length_collapse} remains valid: it depends only on the decomposition $R = \sum_t r_t$ and the property $\mathbb{E}[r_t] > 0$, which hold at the item level regardless of how items are tokenized.

\subsection{Dataset Statistics}
\label{app:dataset}

Our experiments utilize three public datasets: MovieLens-1M\footnote{\url{https://grouplens.org/datasets/movielens/1m/}}, Steam\footnote{\url{https://cseweb.ucsd.edu//~jmcauley/datasets.html\#steam_data/}}, and Amazon-Book\footnote{\url{https://cseweb.ucsd.edu/~jmcauley/datasets/amazon_v2/}}. The MovieLens-1M dataset includes a total of 1,000,209 interactions, with an average interaction length of 165.59 per user, and a total of 3,040 items. The Steam dataset consists of 7,793,069 interactions, with an average length of 3.03 per user, and a total of 15,474 items. The Amazon-Book dataset consists of 29,475,453 interactions, with an average length of 2.86 per user, and a total of 4,493,336 items.

\begin{table}[t]
\caption{Dataset statistics.}
\label{tab:dataset_appendix}
 \begin{center}
\begin{tabular}{lcccc}
 \toprule[1.5pt]
 \textbf{Dataset} & \textbf{\# Users} & \textbf{\# Items} & \textbf{\# Interaction} & \textbf{\# Avg. Int.} \\
 \midrule
 MovieLens-1M & 6,040 & 3,040 & 1,000,209 & 165.59\\
 Steam & 2,567,538 & 15,474 & 7,793,069 & 3.03  \\
 Amazon-Book & 10,297,355 & 4,493,336  & 29,475,453 & 2.86  \\
 \bottomrule[1.5pt]
 \end{tabular}
 \end{center}
 \end{table}

We apply k-core preprocessing to filter out users and items with fewer than a particular interactions to ensure sufficient data for model training. Specifically, for the MovieLens-1M and Steam datasets, we apply 20- and 40-core filters to users and items, respectively. For the Amazon-Book datasets, we conduct a 100-core filter for users and a 40-core filter for items. 

\textbf{Coherence Knowledge Exploitation.} Before getting the Smooth-Guided Data, we need to pre-define the attributes used to exploit the coherence between adjacent items. For the MovieLens-1M datasets, we regard the genres of movies as bridge attributes, which means the adjacent movies that share at least one genre are correlated. We filter out the "Drama" genre, as "Drama" is a large category showing the least correlation. For the Steam dataset, we regard the categories, publisher, and developer as the bridge attributes. For the Amazon-Book dataset, we use the category as the bridge attribute to exploit subsequences with adjacent correlation.

\textbf{Data Splitting.} To ensure a robust evaluation and prevent information leakage across users, we adopt a user-centric data partitioning strategy. Specifically, the entire pool of unique users, along with their corresponding interaction subsequences, is randomly partitioned into training, validation, and testing sets with a ratio of 80\%, 10\%, and 10\%, respectively. This split ensures that the model is tested on previously unseen users, thereby evaluating its generalization capability in proactive scenarios. Detailed statistics for the processed datasets, including the number of proactive logs comprising: history interaction and the target item, are summarized in Table~\ref{tab:processed}.

\begin{table}[t]
\caption{Processed Dataset statistics.}
\label{tab:processed}
 \begin{center}
\begin{tabular}{lccc}
 \toprule[1.5pt]
 \textbf{Dataset} & \textbf{\# Training} & \textbf{\# Validation} & \textbf{\# Test} \\
 \midrule
 MovieLens-1M & 56,141 & 6,771 & 6,177\\
 Steam & 78,152 & 9,522 & 9,956\\
 Amazon-Book & 66,881 & 8,986 & 7,921\\
 \bottomrule[1.5pt]
 \end{tabular}
 \end{center}
 \end{table}

\subsection{Smooth Guided Data Construction}
\label{app:data_process}

While pre-trained Language Models have shown promise~\cite{brown2020language,ouyang2022training}, directly adapting their weights to recommendation often incurs \textit{negative transfer} due to the semantic gap between linguistic contexts and behavioral patterns~\cite{zhang2025collm,bao2023tallrec,wang2025predicting}. To avoid this noise, we opt to pre-train our proactive agent from scratch. However, training on raw interaction logs~\cite{zhu2023influential} is suboptimal: raw sequences reflect passive user drift rather than goal-oriented planning, leading to \textit{goal misalignment}. To bridge this gap, we propose an trajectory mining strategy. Instead of indiscriminately slicing sequences, we distill high-quality, physically coherent \textit{expert demonstrations} from historical logs, governed by a rigorous \textbf{Feasibility Oracle}.

\subsubsection{The Feasibility Oracle}
The core of our mining strategy is to ensure that every step in the training data represents a valid, smooth transition that a user would naturally accept.

\begin{definition}[Feasibility Oracle]
Let $\mathcal{I}$ be the item set. We define the Feasibility Oracle as an indicator function $\mathcal{F}: \mathcal{I} \times \mathcal{I} \to \{0, 1\}$, which evaluates whether the transition $(i_t \to i_{t+1})$ satisfies semantic coherence constraints.
\end{definition}

To ensure generalizability, we instantiate $\mathcal{F}$ for both structured and unstructured scenarios:

\paragraph{Instantiation I: Structure-based (via KG).}
If a Knowledge Graph (KG) $\mathcal{G}$ is available, feasibility is grounded in explicit attribute sharing. Let $\mathcal{N}(i)$ be the set of one-hop neighbors of item $i$ in $\mathcal{G}$. A transition is feasible if:
\begin{equation}
    \mathcal{F}_{\text{KG}}(i_t, i_{t+1}) = \mathbb{I}(|\mathcal{N}(i_t) \cap \mathcal{N}(i_{t+1})| \ge 1).
\end{equation}

\paragraph{Instantiation II: Semantics-based (via LLM).}
For domains lacking structured metadata, we leverage LLMs as a proxy for human judgment on transition naturalness. We construct a verification prompt $\mathcal{P}(i_t, i_{t+1})$ and define:
\begin{equation}
    \mathcal{F}_{\text{LLM}}(i_t, i_{t+1}) = \mathbb{I}(\text{LLM}(\mathcal{P}(i_t, i_{t+1})) = \texttt{"Yes"}).
\end{equation}

\subsubsection{Trajectory Distillation Process}
Guided by $\mathcal{F}$, we refine history logs into a set of \textit{expert demonstrations}. The procedure is detailed in Algorithm~\ref{alg:mining}.

\begin{algorithm}[tb]
   \caption{Goal-oriented Trajectory Mining}
   \label{alg:mining}
\begin{algorithmic}[1]
   \STATE {\bfseries Input:} User sequence $S_u$, Oracle $\mathcal{F}$, History Length $n$
   \STATE {\bfseries Output:} Expert Demonstration Set $\mathcal{D}_{u}$
   
   \STATE Initialize current path $\tau \leftarrow \{ S_u[n] \}$
   \STATE Initialize $\mathcal{D}_{u} \leftarrow \emptyset$

   \FOR{index $k = n + 1$ {\bfseries to} $|S_u|$}
       \STATE Let $prev = S_u[k-1]$, $curr = S_u[k]$
       
       \IF{$\mathcal{F}(prev, curr) == 0$}
           \STATE \COMMENT{// Lack coherence: archive the path}
           \IF{$|\tau| > 1$}
               \STATE Let $g = \tau[-1]$ \COMMENT{The last item as the target}
               \STATE $\mathcal{D}_{u} \leftarrow \mathcal{D}_{u} \cup \{(\tau, g)\}$ 
           \ENDIF
           \STATE $\tau \leftarrow \emptyset $ \COMMENT{Reset path}
       \ENDIF
       \STATE Append $curr$ to $\tau$
   \ENDFOR
   
   \STATE \bfseries Return $\mathcal{D}_{u}$
\end{algorithmic}
\end{algorithm}

By strictly enforcing $\mathcal{F}$, the dataset $\mathcal{D}_{\text{expert}}$ eliminates abrupt transitions while preserving the authentic, multi-step reasoning chains found in real user behavior. This provides the model with a rich set of feasible plans to learn from before optimizing for efficiency in later stages.

\subsection{Semantic Tokenization}
\label{app:semantic}

This subsection details the semantic ID generation process introduced in Section~\ref{app:semantic_id}.

\subsubsection{Item Profile Generation}

For each item, we first build a rich item profile. Instead of relying only on raw fields (e.g., title, short description, sparse attributes), we prompt a large language model to generate a structured, high-level description of the item (e.g., key functions, typical usage scenarios, target users, style, and complementary items). This step normalizes noisy metadata and incorporates external world knowledge, allowing items with similar semantics to be described in a consistent manner, even when their original texts are heterogeneous or incomplete. Specifically, we utilize GPT-4 as a foundation model to generate the item profile. The prompt details and item profile generated are shown in Figure~\ref{fig:prompt_knowledge}.

\begin{figure}[ht]
  \begin{center}
    \centerline{\includegraphics[width=0.8\columnwidth]{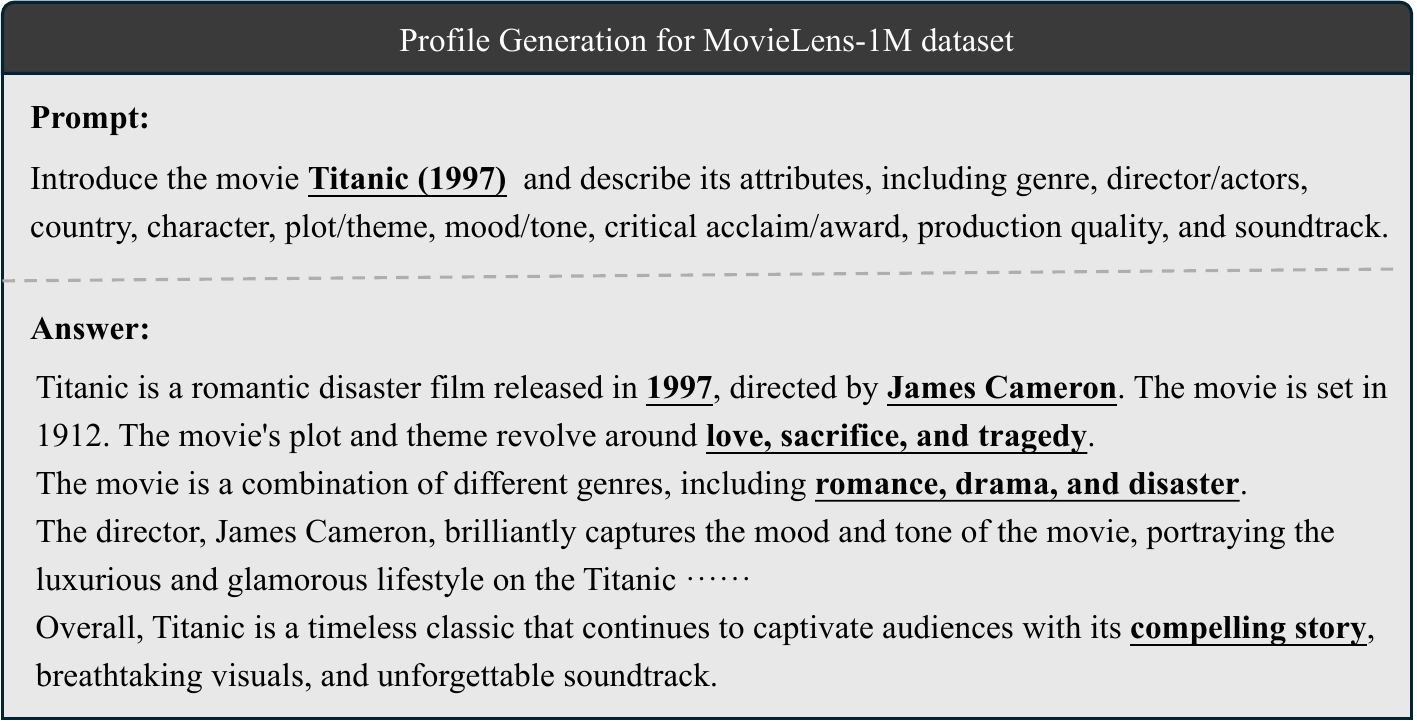}}
    \caption{
       Prompt and item profile for MovieLens-1M dataset.
    }
    \label{fig:prompt_knowledge}
  \end{center}
\end{figure}

\subsubsection{Semantic ID Generation}

To get the semantic ID of each item, we follow the manner of existing works. We first feed the item profile into a text (or multimodal) embedding model to obtain a dense representation in a shared semantic space. Here, we use the state-of-the-art embedding model qwen3-embedding-8B~\cite{qwen3embedding} as our backbone to map the text profile into embeddings. The resulting vector captures both surface-level semantics (brands, categories, attributes) and higher-level concepts (use cases, aesthetics, user intent), which is essential for semantic retrieval and clustering.

Finally, we train a Residual Quantized VAE (RQ-VAE) on these embeddings to map each continuous vector to a compact sequence of codebook indices, i.e., a semantic ID. RQ-VAE progressively quantizes the residuals of the embedding, allowing us to represent items with a short discrete code while preserving fine-grained semantic similarity. The RQ-VAE model comprises three components: a DNN encoder that encodes the input semantic embedding into a latent representation, a residual quantizer that outputs a quantized representation, and a DNN decoder that decodes the quantized representation back into the original semantic input embedding space.

Specifically, the encoder comprises five intermediate layers of sizes 2048, 1024, 512, 256, and 128, each with ReLU activation, culminating in a final latent representation dimension of 128. To quantize this representation, five levels of residual quantization are used. For each level, a codebook of cardinality 128 is maintained, where each vector in the codebook has a dimension of 768 following the output of the qwen3-embedding-8B model~\cite{qwen3embedding}. When computing the total loss, we use $\beta= 0.25$. The RQ-VAE model is trained for 10k epochs. We use Adagrad optimizer with a learning rate of 0.001 and a batch size of 2048. Upon training, we use the learned encoder and the quantization component to generate a 3-tuple Semantic ID for each item. To avoid multiple items being mapped to the same Semantic ID, we add a unique 4th code for items that share the same first three codewords, i.e. two items associated with a tuple (64, 8, 29) are assigned (64, 8, 29, 0) and (64, 8, 29, 1) respectively (if there are no collisions, we still assign 0 as the fourth codeword). This results in a unique Semantic ID of length $K=4$ for each item in the recommendation corpus.

\section{Evaluation Metrics}
\label{app:evaluation}

This section provides detailed definitions of the evaluation metrics used in Section~\ref{sec:exp}. For each subsequence in the testing set, we randomly sampled an item that the user did not interact with as the target item, which follows the setting of the existing works~\cite{bi2024proactive,wang2025leveraging,wang2025tunable}. Finally, we report the results on the test set. We detail the four evaluation metrics used in our experiments: IoI (Increase of Interest), IoR (Increase of Rank), CTR~(i.e., HitRate), and Coherence. CTR is calculated by treating each intermediate item in the guiding sequence as a separate single-choice task. The final CTR reported is the average of these CTR across all correct answers in the sequence. IoI and IoR quantify the sequence-level user interests shift towards the target item. Coherence calculates the correlation of any adjacent items in the guiding sequence. The SASRec evaluator used for computing IoI, IoR, and CTR is trained exclusively on the complete interaction histories of training-split users. Below are the definitions of each metric, along with an example calculation.

\textbf{IoI (Increase of Interest).} IoI quantifies how much the modelled interest in the target item $i_T$ changes when an influence path $L_u$ is appended to the original interaction history $S_u$. Let $P(i \mid s)$ denote the evaluator's predicted acceptance probability of item $i$ conditioned on sequence $s$, and let $\oplus$ denote sequence concatenation. The formula of IoI is:

\begin{equation}
\mathrm{IoI}(S_u, L_u, i_T)
=
\log P\bigl(i_T \mid S_u \oplus L_u\bigr)
-
\log P\bigl(i_T \mid S_u\bigr).
\end{equation}

A positive IoI indicates that, according to the evaluator, the influence path $L_u$ increases the user's preference for the target item $i_T$ relative to using the history $S_u$ alone. Here, we pretrained the SASRec as an evaluator to calculate the $P\bigl(i \mid s\bigr)$.

\textbf{IoR (Increase of Rank).} IoR measures how much the ranking position of the target item $i_T$ improves when the influence path $L_u$ is appended to the original history $S_u$. Let $R(i \mid s)$ denote the rank of item $i$ (with $1$ being the best rank) under the evaluator conditioned on sequence $s$. The IoR is defined as: 

\begin{equation}
\mathrm{IoR}(S_u, L_u, i_T)
=
R\bigl(i_T \mid S_u\bigr) - R\bigl(i_T \mid S_u \oplus L_u\bigr)
.
\end{equation}

By construction, a positive IoR means that the target item moves
upwards in the ranked list (i.e., becomes more prominent in the
recommendation list) after incorporating the influence path $L_u$
into the user sequence. Specifically, we utilize a pretrained SASRec as an evaluator to calculate the $R\bigl(i \mid s\bigr)$.

\textbf{Coherence.} Coherence describes the correlation of the adjacent items in the sequence. We define the metric to quantify how semantically consistent a given guiding sequence $L_u = [i_1, i_2, \dots, i_{|L_u|}]$ is. Let $\mathrm{corr}(i,j)$ denote the correlation between two items
$i$ and $j$. In our setting, $\mathrm{corr}(i,j)$ is defined based on shared item features as shown in Formula~\ref{formula:corr}.
\begin{equation}
\mathrm{corr}(i,j) =
\begin{cases}
\label{formula:corr}
1, & \text{if $i$ and $j$ share at least one common feature,} \\
0, & \text{otherwise.}
\end{cases}
\end{equation}

The coherence of the guiding sequence $L_u$ is then defined as the average correlation
over all adjacent item pairs in $L_u$:
\begin{equation}
\mathrm{Coherence}(L_u)
=
\frac{1}{|L_u|-1}
\sum_{k=1}^{|L_u|-1}
\mathrm{corr}\bigl(i_k, i_{k+1}\bigr).
\end{equation}

\textbf{CTR.} CTR quantifies the average interaction probability of a user $u$ with the items in a guiding sequence $L_u = [i_1, i_2, \dots, i_{|L_u|}]$. Let $f_{\mathrm{SASRec}}(\cdot)$ denote the SASRec~\cite{kang2018self} encoder that maps an item sequence $S$ to a user embedding in $\mathbb{R}^d$. For each position $k \in \{1,\dots,|L_u|\}$, we first construct the prefix-augmented sequence in Formula~\ref{formula:prex}, where $\oplus$ denotes sequence concatenation.
\begin{equation}
\label{formula:prex}
S_u^{(k)} =
\begin{cases}
S_u, & k = 1,\\[2pt]
S_u \oplus [i_1, i_2, \dots, i_{k-1}], & k \ge 2,
\end{cases}
\end{equation}

The corresponding user embedding is then given by $\mathbf{h}_u^{(k)} = f_{\mathrm{SASRec}}\bigl(S_u^{(k)}\bigr) \in \mathbb{R}^d.$ Let $\mathbf{v}_i \in \mathbb{R}^d$ denote the embedding of item $i$. The predicted interaction probability between user $u$ and the $k$-th item $i_k$ in $L_u$ is computed from the inner product between $\mathbf{h}_u^{(k)}$ and $\mathbf{v}_{i_k}$~\cite{bi2024proactive,lian2025itmprec}:

\begin{equation}
p_{u,k}
=
\sigma\!\left(
\bigl(\mathbf{h}_u^{(k)}\bigr)^\top \mathbf{v}_{i_k}
\right),
\qquad
\sigma(x) = \frac{1}{1 + e^{-x}}.
\end{equation}

The sequence-level CTR of user $u$ on the guiding sequence $L_u$
is defined as the average interaction probability:
\begin{equation}
\mathrm{CTR}(S_u, L_u)
=
\frac{1}{|L_u|}
\sum_{k=1}^{|L_u|}
p_{u,k}.
\end{equation}

\section{Baselines}
\label{app:baselines}

To thoroughly evaluate our proposed methods for proactive path reasoning, we conduct comprehensive evaluations across various types of methods, including both Sequential recommendation and proactive recommendation methods. The baselines are as follows:

\subsection{Sequential Recommendation Methods}

\begin{itemize}
    \item \textbf{GRU4Rec~\cite{hidasi2015session}}, a representative baseline that utilizes the standard GRU architecture to effectively encode user interaction sequences.
    \item \textbf{BERT4Rec~\cite{sun2019bert4rec}}, a widely-adopted model that uses bidirectional self-attention layers to incorporate deeper contextual information across user behavior sequences.
    \item \textbf{LightSANs~\cite{fan2021lighter}}, a novel approach that leverages a low-rank decomposition self-attention mechanism to efficiently capture user-item interactions.
    \item \textbf{FEARec~\cite{du2023frequency}}, a contrastive learning-based model that uses time domain attention and auto-correlation.
\end{itemize}

\subsection{Proactive Recommendation Methods}

\begin{itemize}
    \item \textbf{IRN~\cite{zhu2023influential}}, a proactive recommendation paradigm that generates influence paths via a Transformer-based Influential Recommender Network with a personalized impressionability mask that controls how strongly each user is nudged.
    \item \textbf{IPG~\cite{bi2024proactive}}, an iterative preference guidance framework for proactive recommendation that items in guiding sequences are re-ranked using an explicit IPG score that jointly considers interaction probability and guiding value.
    \item \textbf{ITMPRec~\cite{lian2025itmprec}}, an intention-based targeted multi-round proactive recommendation framework which iteratively nudges users toward the pre-match target item via intention-induced scoring and user-specific arousal coefficients.
    \item \textbf{LLM-IPP~\cite{wang2025leveraging}}, a LLM-based method that formulates influence path planning as a prompt-based reasoning task for large language models, enabling them to generate coherent multi-step recommendation paths that guide users from their historical interactions to a designated target item while taking into account item semantics, user intent, and transition coherence.
    \item \textbf{T-PRA~\cite{wang2025tunable}}, a tunable LLM-based proactive recommendation agent that formulates proactive recommendation as a sequential decision-making and path-planning problem, where an LLM-based Actor–Advisor framework~\cite{kahneman2011thinking} adapts recommendations in real time based on simulated user feedback, and an LLM-based Critic with multi-objective rewards is used to perform DPO-style~\cite{rafailov2023direct} agent tuning so that the learned policy optimizes long-term influence toward target items rather than only short-term accuracy.
\end{itemize}

\section{Implementation Details}
\label{app:implementation}

This section provides implementation details for all methods evaluated in Section~\ref{sec:exp}.

\subsection{Sequential Recommendation Methods}

We implement BERT4Rec, GRU4Rec, CORE, LightSANs, and FEARec using the open-source recommendation library RecBole~\cite{zhao2021recbole}. Since sequential recommendation methods are not designed for proactive tasks, to make a justified comparison, we follow the setting from ~\cite{zhao2021recbole}. The details are as follows: 

Given a user's interaction history $S_u$ and a predefined objective item $i_T$, we first employ a standard sequential recommender (e.g., GRU4Rec) to generate a top-k list of next-item candidates at each step. These candidates are then re-ranked according to their distance to the objective item in the item embedding space, and the closest item is greedily appended to the influence path. The procedure is repeated until the objective item is reached or a maximum path length is exceeded. In this way, the baseline still learns user preferences and sequential dependencies in the usual next-item fashion, but the greedy re-ranking at inference time makes the resulting recommendation sequence proactively drift toward the target item.

For these models above, we ensured that key settings, such as batch size, the number of encoder blocks, and attention heads, were aligned with our model for a fair comparison. However, for other settings, we followed the recommended configurations in the original papers.

To make a fair comparison and avoid label leakage, we collect the interaction data from the user in the training set, as described in Section~\ref{app:data_process}, to train the sequential recommendation methods for validation.

\subsection{Proactive Recommendation Methods}

\begin{itemize}
    \item \textbf{IRN~\cite{zhu2023influential}}: we set the mask weights to \(w_t = 1\) and \(w_h = 0.05\). We segment users' interaction sequences into subsequences whose length lies between \(l_{\min} = 20\) and \(l_{\max} = 60\). The model is trained for 200 epochs, and during inference, we generate influence paths of length 10, consistent with the original inference setup.
    \item \textbf{IPG~\cite{bi2024proactive}}: we set the preference-evolution coefficient $\gamma=0.8$ following the original work. For user and item representation, we adopt a pretrained BERT4Rec as the backbone to compute their embeddings with 64 dimension. We iteratively generate 10 intermediate items as influence paths, consistent with the original inference setup.
    \item \textbf{ITMPRec~\cite{lian2025itmprec}}: We adopt the item embeddings learned by BERT4Rec and perform k-means clustering with \(k = 256\) to obtain the intention vectors $C$. The personalized preference-evolution coefficient \(\gamma_u\) is thresholded with a cutoff of 0.2. The intention-level coefficient is set to \(\lambda = 0.1\), and for the top-\(n\) pre-selection stage before re-ranking, we follow the original work and choose the top 10 items. We generate influence paths comprising 10 items which follows the original inference setup.
    \item \textbf{LLM-IPP~\cite{wang2025leveraging}}: we adopt Llama-3.1-8B-Instruct~\cite{dubey2024llama} as the backbone model to generate the guiding sequences, and use the Tree-of-Thought prompt variant, which was reported to achieve the best performance in the original work.
    \item \textbf{T-PRA~\cite{wang2025tunable}}: we follow the original hyperparameter settings: we use Llama-3.1-8B-Instruct~\cite{dubey2024llama} as the base model for all agents, and fine-tune them with LoRA~\cite{hu2022lora} of rank 8 applied to all transformer modules. The models are trained for 5 epochs per dataset with a learning rate of \(5\times10^{-5}\), a cosine learning-rate scheduler with a warm-up ratio of \(10\%\), and 8 gradient accumulation steps. During inference, we set the decoding temperature to 0.5.
\end{itemize}

\subsection{ProRL}
\label{app:prorl}

ProRL undergoes a two-stage framework including pretraining and reinforcement learning. To obtain a strong prior $\pi_0$, we pretrain a T5-based sequence-to-sequence model on carefully constructed Smooth-Guided paths elaborated in Section~\ref{app:data_process}. As described in Section~\ref{app:semantic_id}, the model operates on semantic ID sequences: each item is represented by $K=4$ tokens, and the policy autoregressively generates these tokens. This pretraining stage provides: a semantic prior that constrains the action space, planning capability for guiding the path, and a foundation for efficient RL fine-tuning.

In the RL stage, we explore the optimal strategy leading to both path feasibility and guiding effectiveness by SRC and PSAE. The rewards are computed at the item level after decoding the generated token sequences back to items, and the gradients are distributed to all tokens within each item according to Eq.~\eqref{eq:gradient_distribution}.

\begin{table}[t]
    \centering
    \caption{Hyperparameter settings categorized by training stages.}
    \label{tab:hyperparameters_grouped}
    
    \begin{tabular*}{0.75\textwidth}{@{\extracolsep{\fill}} l l c c c}
        \toprule[1.5pt]
        \textbf{Stage} & \textbf{Hyperparameter} & \textbf{MovieLens-1M} & \textbf{Steam} & \textbf{Amazon-Book} \\
        \midrule
        
        \multirow{8}{*}{\textbf{Common}} 
          & num\_layers & 3 & 3 & 3 \\
          & d\_model & 128 & 128 & 128 \\
          & d\_ff & 512 & 512 & 512 \\
          & num\_heads & 4 & 4 & 4 \\
          & d\_kv & 64 & 64 & 64 \\
          & dropout\_rate & 0.1 & 0.1 & 0.1 \\
          & optimizer & adamw & adamw & adamw \\
        \midrule
        
        \multirow{5}{*}{\textbf{Pretrain}} 
          & learning\_rate & 0.005 & 0.005 & 0.005 \\
          & batch\_size & 1,024 & 1,024 & 1,024 \\
          & max\_epochs & 200 & 200 & 200 \\
          & warmup\_steps & 10,000 & 10,000 & 10,000 \\
          & vocabulary\_size & 1,026 & 1,026 & 1,026 \\
        \midrule
        
        \multirow{6}{*}{\textbf{RL}} 
          & learning\_rate & 1e-4 & 1e-5 & 5e-4 \\
          & batch\_size & 128 & 128 & 128 \\
          & num\_return\_samples & 16 & 16 & 16 \\
          & temperature & 1 & 1 & 1 \\
          & kl\_coeff & 0.01 & 0.01 & 0.01 \\
          & RL\_epochs & 50 & 50 & 50 \\
          & $\alpha$ & 1 & 1 & 1 \\
          & $\beta$ & 1 & 1 & 1 \\
          & $\gamma$ & 1 & 1 & 1 \\
        \bottomrule[1.5pt]
    \end{tabular*}
\end{table}

\begin{table}[t!]
  \centering
  \caption{Experimental Results on Different Datasets}
  \label{tab:ablation_dataprocess}
  \resizebox{0.8\linewidth}{!}{
    \begin{tabular}{cccccc}
      \toprule[1.5pt]
      Dataset & Model & CTR & Coherence & IoI & IoR \\
      \midrule
      \multirow[c]{2}{*}{MovieLens-1M}
        & w SmGD & 0.8671 & \textbf{0.7488} &\textbf{0.8600} & \textbf{254.42} \\
        & w/o SmGD      & \textbf{0.9110} & 0.5522 & -0.0531 & 75.23 \\
      \midrule
      \multirow[c]{2}{*}{Steam}
        & w SmGD & 0.7453 & \textbf{0.9493} & \textbf{0.4230} & \textbf{101.15} \\
        & w/o SmGD      & \textbf{0.7787} & 0.8051 & 0.2598 & 64.63 \\
      \midrule
      \multirow[c]{2}{*}{Amazon-Book}
        & w SmGD & \textbf{0.6410} & \textbf{0.6885} & \textbf{0.1650} & \textbf{72.91} \\
        & w/o SmGD      & 0.6155 & 0.5506 & 0.1026 & 17.01 \\
      \bottomrule[1.5pt]
    \end{tabular}
  }
\end{table}

\textbf{Pretraining.} We implement a lightweight encoder-decoder Transformer backbone adapted from T5~\cite{raffel2020exploring}. The model is configured with a shallow depth, utilizing three layers for both the encoder and the decoder. Regarding the attention mechanism, we employ four heads with a head dimension of 64. We set the hidden dimension to 128 and the intermediate feed-forward network dimension to 512, while using ReLU for activation. This configuration yields a highly efficient model with approximately 1.97M parameters (excluding embeddings).

\textbf{Reinforcement Learning.} Following the pretraining phase, we employ reinforcement learning to further refine the policy $\pi$, shifting the model's focus from mere sequence imitation to goal-oriented trajectory optimization. The reward signal comprises two primary components: a feasible reward that ensures user acceptance of intermediate items, and a guidance reward that quantifies the effectiveness of guidance.

To address the high variance and sparse signals inherent in long-sequence generation, we employ Stepwise Reward Centering. This technique dynamically adjusts the baseline for rewards at each step, effectively eliminating the length bias incurred by length manipulation. Furthermore, we implement Position-Specific Advantage Estimation, which assigns credit more precisely by considering the temporal importance of each item in the guidance path. During the fine-tuning process, we optimize the policy using a policy gradient objective, incorporating a KL-divergence constraint relative to $\pi_0$ to prevent the model from collapsing into suboptimal, repetitive paths.

This two-stage approach enables ProRL to generate trajectories that are not only highly reachable for users but also strategically aligned with the intended guidance objectives.

{\paragraph{A2C Baseline Implementation.}
\label{app:a2c}
For the A2C baseline in Section~\ref{sec:ablation_gradient}, we implement a critic network as a 2-layer MLP with 256 hidden units. Since a randomly initialized critic invariably causes training collapse, we warm up the critic for 5 epochs before joint actor-critic training. We search over the critic loss coefficient in $\{0.1, 0.25, 0.5, 1.0\}$, with all other settings identical to ProRL. We report the best results (coefficient 0.25).}

\section{Supplementary Experiments}
\label{app:supp_exp}

This section presents additional experiments including ablation studies, sensitivity analyses, and cross-environment evaluations to further validate the ProRL framework.

\subsection{Smooth-Guided Data Construction}

To validate the need to utilize the Smooth-Guided Data (SmGD) described in Section~\ref{app:data_process} for pretraining, we conducted an ablation study comparing our approach with the data processing method proposed in~\cite{zhu2023influential}. The results are shown in Table~\ref{tab:ablation_dataprocess}. Comparative results demonstrate that the model pretrained on SmGD outperforms the model trained on randomly sliced data across most proactive recommendation metrics. This confirms that semantically coherent training data is crucial for effective guidance.

\begin{figure}[!t]
\begin{center}
\centerline{\includegraphics[width=\columnwidth]{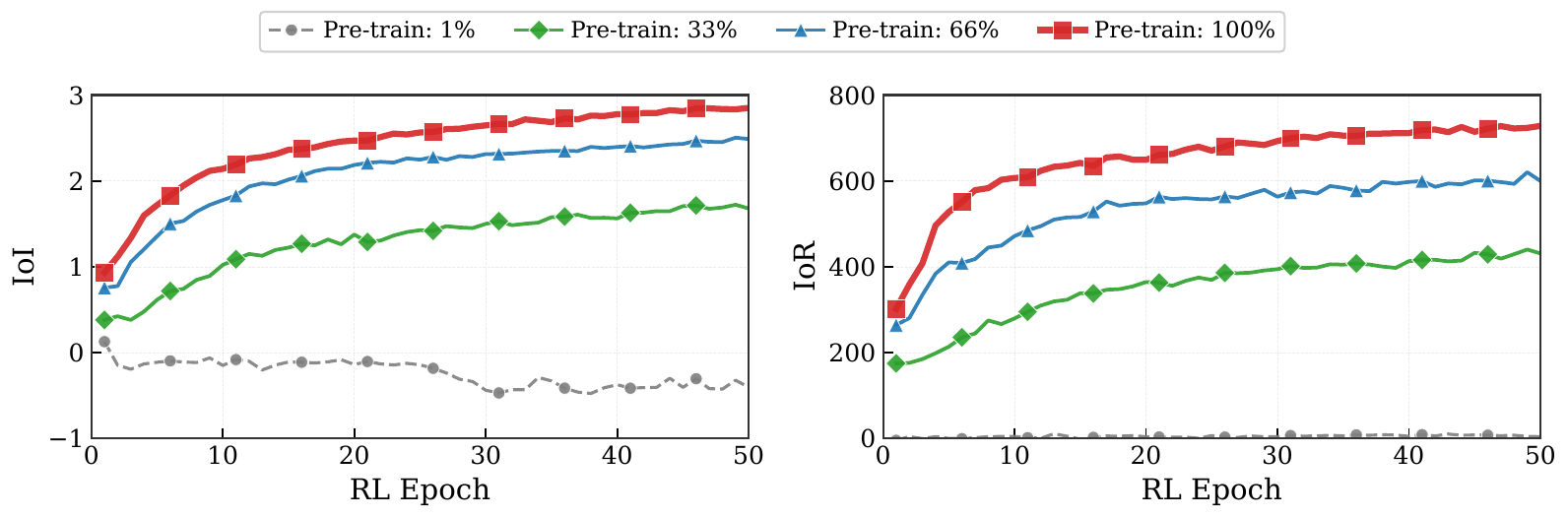}}
\caption{The impact of pre-training maturity on reinforcement learning efficiency. We evaluate the performance of ProRL across different stages of the pre-training process (1\%, 33\%, 66\%, and 100\% completion) on the MovieLens-1M dataset. The results indicate that a sufficiently converged semantic prior is a prerequisite for effective RL optimization, as it effectively constrains the action space and mitigates the sparse reward challenge in proactive guidance.}
\label{fig:pretrain_impact}
\end{center}
\end{figure}

\subsection{Impact of Pre-training Initialization}

To investigate the dependence of reinforcement learning on the quality of semantic priors, we analyzed the performance of ProRL when initialized with checkpoints from different stages of the pre-training process (1\%, 33\%, 66\%, and 100\%). As illustrated in Figure~\ref{fig:pretrain_impact}, the agent initialized with only minimal pre-training fails to learn a meaningful policy, confirming that the sparsity of successful guidance signals in the high-dimensional action space renders cold-start exploration infeasible. Conversely, we observe a strict positive correlation between the maturity of the supervised prior and the efficiency of the RL phase. This demonstrates that robust supervised pre-training is not merely a warm-up but a foundational prerequisite, constructing a semantic map that narrows the action space and enables the agent to effectively optimize for long-term strategic guidance.

\subsection{Model Robustness Analysis}

Target selection in proactive recommendation generally prioritizes either random items~\cite{zhu2023influential,bi2024proactive,wang2025leveraging,wang2025tunable} or those with high user interaction potential~\cite{lian2025itmprec}. To verify the robustness of our approach, we implement two selection schemes: Random Selection and Filtered Selection. In the former, we randomly assign a non-interacted item as the target. In the latter, we score candidate items based on predicted interaction willingness and select those ranked at the 20th, 40th, and 60th percentiles as targets. This design enables us to evaluate the guidance capabilities of our method against baselines under varying degrees of target difficulty.

Specifically, we compare against FEARec (best sequential model in Table~\ref{tab:overall comparison}) and proactive methods (IPG, ITMPRec, LLM-IPP) across all three datasets. The results in Figure~\ref{fig:robust} consistently demonstrate that ProRL achieves superior robustness.

\begin{figure}[!t]
\begin{center}
\centerline{\includegraphics[width=\columnwidth]{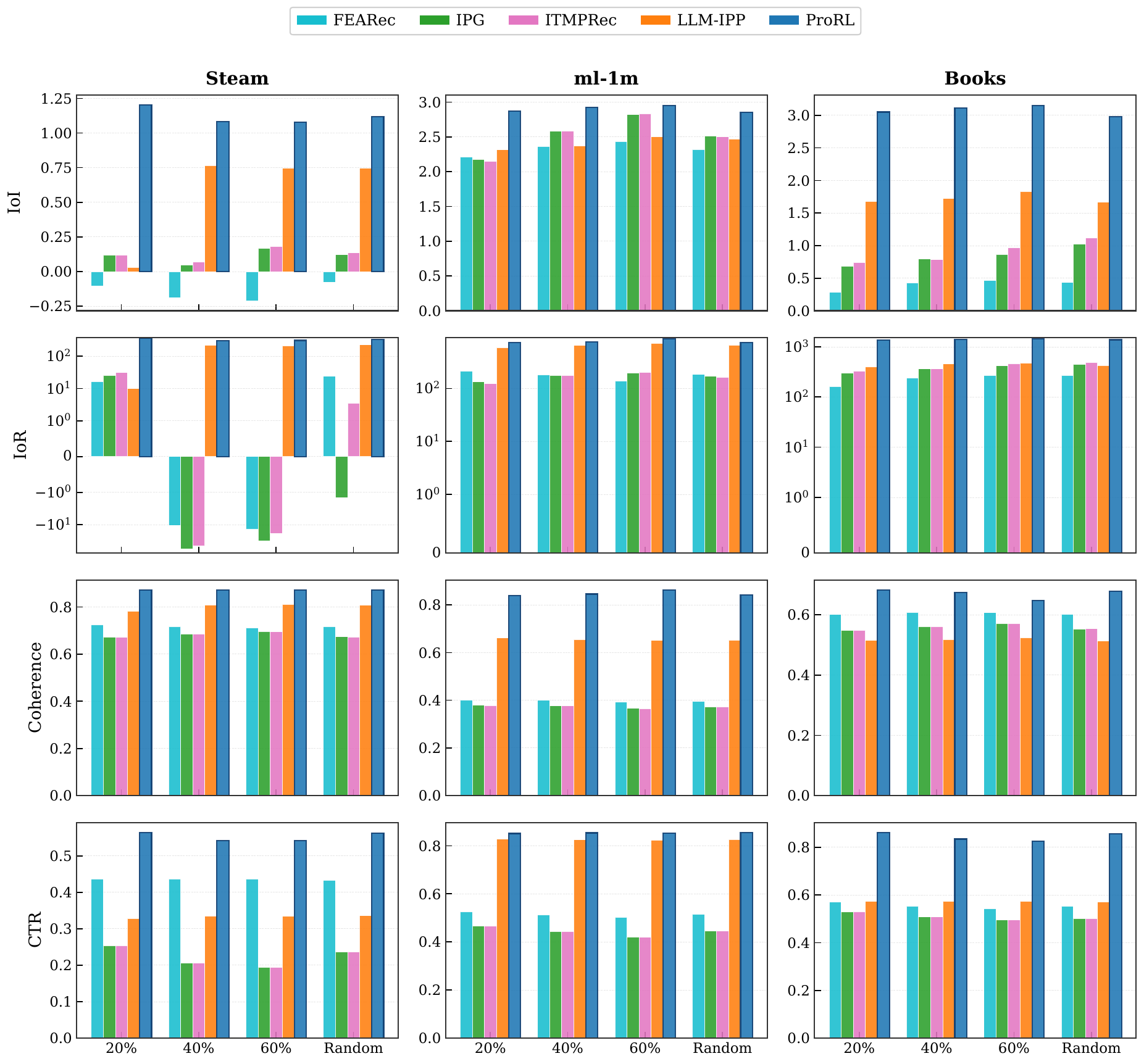}}
\caption{Robustness analysis across varying target selection schemes and guidance difficulties. We evaluate the performance on three datasets under Random Selection and Filtered Selection (20th, 40th, and 60th percentiles of interaction willingness). Higher percentiles represent higher guidance difficulty (lower user interest). Our method consistently outperforms baselines across all metrics, CTR, Coherence, IoI, and IoR, demonstrating superior robustness regardless of target accessibility.}
\label{fig:robust}
\end{center}
\end{figure}

\subsubsection{Performance Superiority Across Diverse Metrics}
Robustness in proactive recommendation requires a model to maintain high user satisfaction while effectively executing guidance goals. ProRL demonstrates a ``Pareto dominance'' over existing methods across all four key dimensions:

\begin{itemize}
    \item \textbf{User Engagement Preservation (CTR \& Coherence):} 
    Unlike baseline models that often sacrifice user experience to force guided items, ProRL maintains exceptionally high engagement metrics.
    On the dense \textit{MovieLens-1M} dataset, our model sustains a Click-Through Rate (CTR) of approximately $0.89$ and a Semantic Coherence of $0.95$ across all intervention ratios. In contrast, strong baselines like FEARec only achieve CTRs in the range of $ 0.55$ to $0.60$. 
    Even on the sparse \textit{Steam} dataset, where maintaining coherence is challenging, ProRL achieves a Coherence score of $>0.8$, significantly outperforming purely generative baselines, such as ITMPRec ($0.65$ on average). This indicates that the latent space editing mechanism of ProRL successfully preserves the user's inherent preference manifold while injecting guided items.

    \item \textbf{Guidance Efficacy and Impact (IoI \& IoR):} 
    On the \textit{Steam} dataset, most baselines exhibit negative IoI values, indicating that guided items disrupt natural item transitions. ProRL consistently maintains positive IoI scores. In terms of IoR, ProRL achieves scores between $1300$ and $1500$ on \textit{Steam}, an order of magnitude higher than standard baselines (typically $<200$).
\end{itemize}

\subsubsection{Stability Under Varying Intervention Intensities}
A robust proactive recommender must remain stable regardless of the aggressiveness of the guidance signal. We analyzed the performance variance under different target ratios ($20\%$, $40\%$, $60\%$) and a stochastic setting:

\begin{itemize}
    \item \textbf{Insensitivity to Guidance Pressure:} 
    Standard proactive models often suffer from performance degradation as the guidance target ratio increases (e.g., forcing $60\%$ of items to be from a target set). However, ProRL exhibits remarkable stability. On the \textit{MovieLens-1M} dataset, as the ratio increases from $20\%$ to $60\%$, the fluctuation in Coherence is minimal (staying above $0.94$), whereas competitive baselines like ITMPRec see a sharper decline. This suggests that ProRL's gradient-based perturbation finds optimal injection points that are resilient to the quantity of guided items.
    
    \item \textbf{Resilience to Random Targets:} 
    The \textit{Random} setting serves as a stress test with unpredictable guidance goals. ProRL adapts seamlessly, achieving a CTR of $0.547$ and Coherence of $0.89$ on \textit{MovieLens-1M}, matching or exceeding fixed-ratio scenarios. This confirms that ProRL learns a robust policy rather than overfitting to a specific intervention pattern.
\end{itemize}

\subsubsection{Adaptability to Data Characteristics}
ProRL's consistent top performance across domains with varying data densities, from the sparse \textit{Steam} to the dense \textit{MovieLens-1M}, confirms that its core mechanism is domain-agnostic.

\subsection{Performance on Unseen evaluators: Full Results}
\label{app:unseen evaluator}

To show the generalization ability of our methods, we evaluate the performance on GRU4Rec, BERT4Rec and LightSANs as unseen evaluators during training process. The results of the BERT4Rec and LightSANs are shown in Table~\ref{tab:bert4rec_comparison}, Table~\ref{tab:lightsans_comparison} respectively.

\begin{table*}[t]
\centering
  \caption{Proactive Recommendation performance of all models on different datasets (BERT4Rec as evaluator) in terms of CTR~(i.e., HitRate), Coherence, IoI, and IoR. The best performances are highlighted in bold. The superscript * indicates the Improvement is statistically significant, where the p-value is less than 0.05.}\label{tab:bert4rec_comparison}
  \resizebox{\textwidth}{!}{
    \begin{tabular}{c c cccc cccc cccc}
      \toprule[1.5pt]
            \multicolumn{2}{c}{\textbf{Dataset}} & \multicolumn{4}{c}{\textbf{MovieLens-1M}} & \multicolumn{4}{c}{\textbf{Steam}} & \multicolumn{4}{c}{\textbf{Amazon-Book}} \\
            \cmidrule(lr){3-6} \cmidrule(lr){7-10} \cmidrule(lr){11-14}
            \multicolumn{2}{c}{\textbf{Model}} & CTR & Coherence & IoI & IoR & CTR & Coherence & IoI & IoR & CTR & Coherence & IoI & IoR\\
        \midrule
            \multicolumn{2}{c}{GRU4Rec}
            & 0.5914 & 0.3717 & 2.0435 & 69.08 & 0.4716 & 0.7026
            & -0.0863 & -8.44 & 0.5748 & 0.5838 & -0.0554 & 100.99 \\
            \multicolumn{2}{c}{LightSANs}
            & 0.5995 & 0.3957 & 2.0493 & 83.29 & 0.4556 & 0.7150
            & -0.0784 & -13.41 & 0.5783 & 0.5934 & 0.0865 & 165.33 \\
            \multicolumn{2}{c}{FEARec}
            & 0.5849 & 0.3964 & 2.1734 & 109.23 & 0.4509 & 0.7177
            & -0.0937 & -11.39 & 0.5637 & 0.6020 & 0.1803 & 231.99 \\
        \hline
        \midrule
            \multicolumn{2}{c}{IRN}
            & 0.7688 & 0.4706 & 2.2364 & 121.64 & 0.3740 & 0.6698
            & -0.5034 & -10.15 & 0.5607 & 0.5477 & 0.0217 & 111.62 \\
            \multicolumn{2}{c}{IPG}
            & 0.4887 & 0.3725 & 2.5595 & 146.41 & 0.2246 & 0.6740
            & 0.1017 & 11.43 & 0.5072 & 0.5531 & 1.0802 & 548.84 \\
            \multicolumn{2}{c}{ITMPRec}
            & 0.4821 & 0.3714 & 2.5632 & 150.00 & 0.2262 & 0.6725
            & 0.1117 & 11.73 & 0.5068 & 0.5540 & 1.0939 & 552.92 \\
            \multicolumn{2}{c}{LLM-IPP}
            & 0.6540 & 0.6288 & 2.2720 & 85.45 & 0.3424 & 0.8022
            & -0.4542 & -12.19 & 0.5709 & 0.5132 & 0.2681 & 176.14 \\
            \multicolumn{2}{c}{T-PRA}
            & 0.4612 & 0.3415 & 2.4502 & 220.75 & 0.3012 & 0.7399
            & 0.2215 & 24.12 & 0.5024 & 0.4418 & 0.6588 & 323.12 \\
        \hline
        \midrule
            \rowcolor{tableours}
            \multicolumn{2}{c}{\textbf{ProRL (Ours)}}
            & \textbf{0.8403}$^{*}$ & \textbf{0.8422}$^{*}$ & \textbf{2.6111} & \textbf{699.03}$^{*}$ & \textbf{0.4805}$^{*}$ & \textbf{0.8707}$^{*}$ &
            \textbf{0.4258}$^{*}$ & \textbf{68.27}$^{*}$ & \textbf{0.8192}$^{*}$ & \textbf{0.6823}$^{*}$ & \textbf{2.7400}$^{*}$ & \textbf{1290.14}$^{*}$ \\
      \bottomrule[1.5pt]
      \end{tabular}
      }
\end{table*}

\begin{table*}[t]
\centering
  \caption{Proactive Recommendation performance of all models on different datasets (LightSANs as evaluator) in terms of CTR~(i.e., HitRate), Coherence, IoI, and IoR. The best performances are highlighted in bold. The superscript * indicates the Improvement is statistically significant, where the p-value is less than 0.05.}\label{tab:lightsans_comparison}
  \resizebox{\textwidth}{!}{
    \begin{tabular}{c c cccc cccc cccc}
      \toprule[1.5pt]
            \multicolumn{2}{c}{\textbf{Dataset}} & \multicolumn{4}{c}{\textbf{MovieLens-1M}} & \multicolumn{4}{c}{\textbf{Steam}} & \multicolumn{4}{c}{\textbf{Amazon-Book}} \\
            \cmidrule(lr){3-6} \cmidrule(lr){7-10} \cmidrule(lr){11-14}
            \multicolumn{2}{c}{\textbf{Model}} & CTR & Coherence & IoI & IoR & CTR & Coherence & IoI & IoR & CTR & Coherence & IoI & IoR\\
        \midrule
            \multicolumn{2}{c}{GRU4Rec}
            & 0.4136 & 0.3717 & 1.5489 & 102.42 & 0.4275 & 0.7026
            & 0.0391 & 27.93 & 0.5527 & 0.5838 & 0.1417 & 119.23 \\
            \multicolumn{2}{c}{BERT4Rec}
            & 0.4432 & 0.3889 & 1.2425 & 73.62 & 0.4444 & 0.7390
            & 0.0927 & 26.23 & 0.5662 & 0.5591 & 0.1623 & 113.93 \\
            \multicolumn{2}{c}{FEARec}
            & 0.4126 & 0.3964 & 1.7654 & 153.61 & 0.4189 & 0.7177
            & -0.0104 & 9.07 & 0.5489 & 0.6020 & 0.5008 & 227.05 \\
        \hline
        \midrule
            \multicolumn{2}{c}{IRN}
            & 0.6812 & 0.4706 & 1.9027 & 188.26 & 0.3452 & 0.6698
            & 0.0240 & 8.87 & 0.5334 & 0.5477 & 0.1913 & 131.60 \\
            \multicolumn{2}{c}{IPG}
            & 0.3417 & 0.3725 & 2.1786 & 182.28 & 0.2354 & 0.6740
            & 0.1361 & 35.56 & 0.5401 & 0.5531 & 0.5428 & 255.12 \\
            \multicolumn{2}{c}{ITMPRec}
            & 0.3323 & 0.3714 & 2.2083 & 187.68 & 0.2325 & 0.6725
            & 0.1440 & 39.88 & 0.5427 & 0.5540 & 0.5585 & 239.98 \\
            \multicolumn{2}{c}{LLM-IPP}
            & 0.7722 & 0.6288 & 2.5571 & 681.90 & 0.3198 & 0.8022
            & 0.9927 & 237.36 & 0.5651 & 0.5132 & 1.5765 & 430.93 \\
            \multicolumn{2}{c}{T-PRA}
            & 0.5128 & 0.3415 & 2.6012 & 712.21 & 0.2617 & 0.7399
            & 1.0823 & 220.12 & 0.5012 & 0.4418 & 1.7812 & 502.98 \\
        \hline
        \midrule
            \rowcolor{tableours}
            \multicolumn{2}{c}{\textbf{ProRL (Ours)}}
            & \textbf{0.8090}$^{*}$ & \textbf{0.8422}$^{*}$ & \textbf{2.9820}$^{*}$ & \textbf{755.83}$^{*}$ & \textbf{0.5239}$^{*}$ & \textbf{0.8707}$^{*}$ &
            \textbf{1.3722}$^{*}$ & \textbf{306.12}$^{*}$ & \textbf{0.8912}$^{*}$ & \textbf{0.6775}$^{*}$ & \textbf{2.8851}$^{*}$ & \textbf{1286.74}$^{*}$ \\
      \bottomrule[1.5pt]
      \end{tabular}
      }
\end{table*}

\subsection{Alternative Approaches to Eliminating the Length Shortcut}
\label{app:alternatives}

Section~\ref{sec:centering} introduces Stepwise Reward Centering, which eliminates the length shortcut by subtracting the empirically estimated expected step reward. A natural question arises: can we achieve the same effect through manual hyperparameter tuning instead of data-driven estimation?

\textbf{Alternative Approach: Fixed Offset.} We consider a simplified alternative where a fixed offset $\epsilon$ is subtracted from the variance-normalized step reward:
\begin{equation}
\tilde{r}_t = \sum_{i=1}^{K} w_i \cdot \frac{r_t^{(i)}}{\sigma^{(i)}} - \epsilon,
\end{equation}
where $\sigma^{(i)}$ is the standard deviation of the $i$-th reward component. Unlike Eq.~\eqref{eq:reward-normalize}, this formulation omits the mean subtraction $\mu^{(i)}$ and instead relies on a manually determined offset $\epsilon$ to neutralize the positive bias in step rewards. By tuning $\epsilon$, one might hope to manually achieve zero expected gain from path extension.

\textbf{Experimental Setup.} In multi-objective settings, the interaction between multiple reward components would make offset tuning even more complex and unstable. To give this alternative approach its best chance, we simplify the evaluation by using IoI as the sole reward signal on the Amazon-Book dataset. All other training hyperparameters remain identical to the main experiments. We vary the offset $\epsilon \in \{0.0, -0.2, -0.4, -0.6, -0.8, -1.0\}$ and monitor the average path length of rollouts during the first 10 epochs of RL training.

\textbf{Results.} Figure~\ref{fig:epoch_length} reveals the extreme sensitivity of this approach. When $\epsilon$ is small (close to 0, light orange curves), the positive bias in step rewards persists, and the model rapidly converges to maximum-length paths ($L \approx 10$), exhibiting the length shortcut phenomenon described in Section~\ref{sec:analysis}. As $\epsilon$ increases in magnitude, a phase transition occurs: at $\epsilon \approx -0.8$, paths collapse to minimal length ($L \approx 1$), and at $\epsilon = -1.0$, the model generates near-empty paths ($L \approx 0$). Between these extremes, intermediate values of $\epsilon$ (e.g., $-0.6$) produce unstable behavior, since path length varies significantly across epochs rather than converging to a stable value.

\textbf{Implications.} These results demonstrate that even in the simplified single-reward setting, the effective operating region for manual offset tuning is extremely narrow. A small miscalibration leads to either the original length shortcut (overlong paths) or the opposite failure mode (trivially short paths). In practice, multi-objective rewards would introduce additional complexity, making robust offset selection even more challenging. In contrast, ProRL (dark blue starred curve) achieves stable, reasonable path lengths ($L \approx 3$--$4$) without any manual tuning. By estimating $\mu^{(i)}$ from rollouts collected during the first training epoch and freezing the estimates thereafter, Stepwise Reward Centering automatically calibrates to the actual reward distribution without manual tuning, ensuring that path extension yields zero expected gain throughout training. This data-driven approach eliminates the need for sensitive hyperparameter search and provides robust performance across different reward configurations and datasets.

\begin{figure}[t!]
\begin{center}
\centerline{\includegraphics[width=0.6\columnwidth]{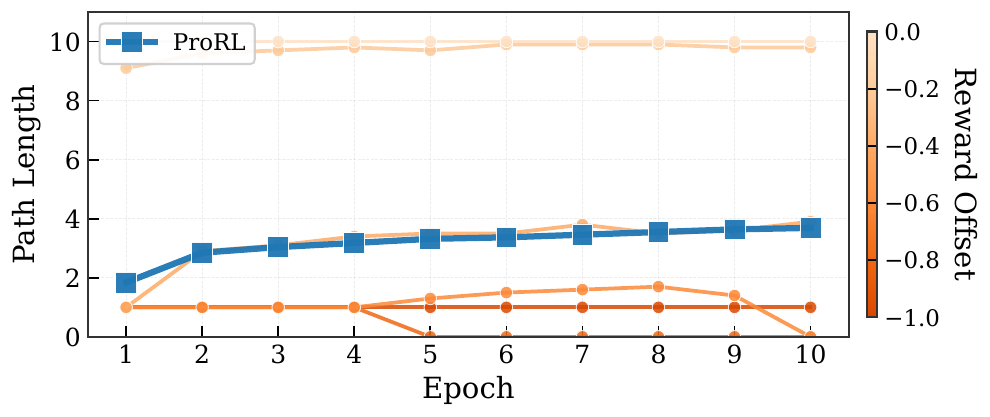}}
\caption{Sensitivity analysis of fixed reward offset on Amazon-Book using IoI as the sole reward. The color gradient indicates offset magnitude (darker = more negative). Small offsets (light orange) lead to maximum-length paths (length shortcut), while large offsets (dark orange) cause path collapse to near-zero length. ProRL (blue stars) achieves stable, moderate path lengths through data-driven reward centering without manual tuning.}
\label{fig:epoch_length}
\end{center}
\end{figure}

\subsection{Decision Quality Evaluation}

To evaluate the quality of local decisions, we compare performance at each path length, as shown in Figure~\ref{fig:length_profit}. ProRL consistently outperforms baselines across all steps. Unlike baselines that rely on prolonged interactions to slowly accumulate preference shifts, ProRL ensures that every step contributes meaningfully. By addressing the length shortcut and high gradient variance, ProRL maximizes the utility of each step, demonstrating that superior path-level performance is built on effective optimization at every position.

\begin{figure}[t!]
\begin{center}
\centerline{\includegraphics[width=0.92\columnwidth]{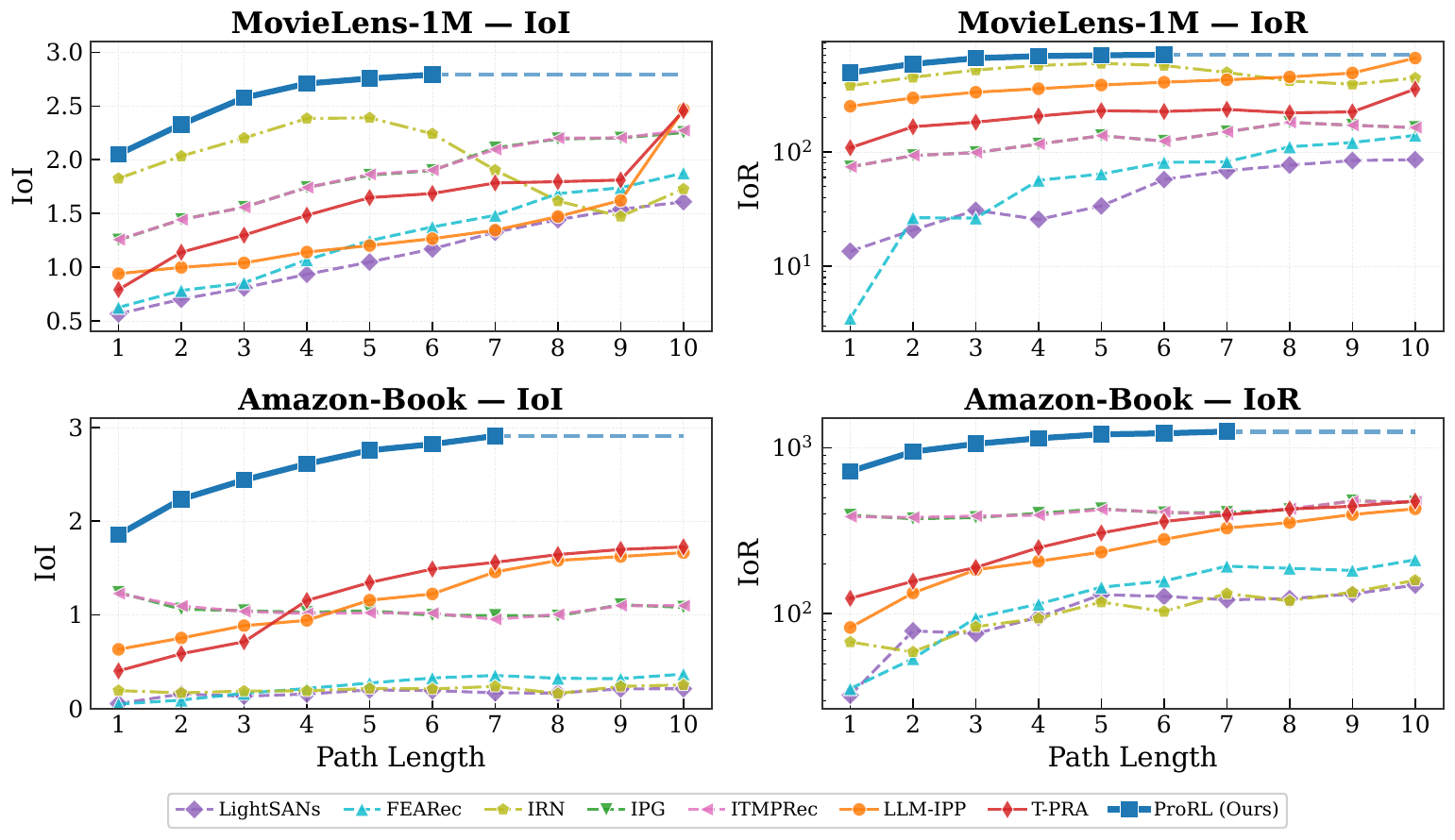}}
\caption{Performance comparison across varying path lengths on the MovieLens-1M (A, B) and Amazon-Book (C, D) datasets.}
\label{fig:length_profit}
\end{center}
\end{figure}

\end{document}